\documentclass{amsart}
\usepackage[T1]{fontenc}
\usepackage[utf8]{inputenc}
\usepackage{amsmath,amssymb,amsthm}
\usepackage{xcolor}
\usepackage{cleveref}

\newtheorem{theorem}{Theorem}[section]
\newtheorem{lemma}[theorem]{Lemma}
\newtheorem{proposition}[theorem]{Proposition}
\newtheorem{corollary}[theorem]{Corollary}
\theoremstyle{definition}
\newtheorem{definition}[theorem]{Definition}
\newtheorem{example}[theorem]{Example}
\theoremstyle{remark}
\newtheorem{remark}[theorem]{Remark}

\newcommand{\pa}{\boldsymbol{\theta}}
\newcommand{\pasp}{\Theta}
\newcommand{\alt}[1]{\tilde{#1}}

\usepackage{soul}
\usepackage{graphicx}

\title[Polynomial Residual Networks are Generically Identifiable]{Polynomial Residual Networks \\ are Generically Identifiable}

\author{Isabel Dahlgren}
\address{Department of Mathematics, KTH, 114 28 Stockholm, Sweden}
\email{isabelda@kth.se}

\author{Giovanni Luca Marchetti}
\address{AI4S AB, Stockholm, Sweden}
\email{giovanni.marchetti@aiforscience.se}

\author{Kathlén Kohn}
\address{Department of Mathematics, KTH, 114 28 Stockholm, Sweden; Digital Futures, Osquars backe 5, 114 28 Stockholm, Sweden}
\email{kathlen@kth.se}

\date{\today}

\begin{document}

\maketitle

 \textbf{Abstract.} We prove identifiability results for constant-width residual networks with polynomial activations and bias terms. For activations of degree at least five with generic coefficients, the fiber over a generic parameter is a singleton. For activations of degree at least three with no linear term after a suitable translation of the input, the generic fiber is given by translating biases and then scaling weights and translated biases by real roots of unity. These results specialize to networks without bias terms. With the same proof technique, we further derive necessary conditions for critical points of the ResNet parametrization, that are central in singular learning theory.  

\section{Introduction}

Skip connections are standard components of contemporary neural network architectures. A skip connection allows information to bypass one or more layers of a network; in residual networks (ResNets), it adds the input of a block to its transformed output. Used early on in highway networks \cite{srivastava2015highway} and popularized by ResNets \cite{he2015deepresiduallearningimage}, skip connections now appear in virtually every large-scale model, from U-Nets \cite{ronneberger2015u} to Transformers \cite{vaswani2017attention}. This simple modification makes it possible to train networks with hundreds or even thousands of layers stably.

Mathematically, however, skip connections are poorly understood. Little is known about how they affect the parameter symmetries of a network, the geometry of its function space, or the critical points of its parametrization. These properties are well studied for many skip-connection free networks, but largely unexplored for residual ones.

A notable hypothesis is that skip connections ease training by breaking the symmetries and degeneracies of the parametrization
\cite{orhan2017skip}. So far, this hypothesis is supported mainly by experiments and simplified models. We study it rigorously for polynomial activations, asking which parameter symmetries survive the addition of skip connections and how the fibers and critical points of the parametrization change.

Polynomial activations are well suited to this question. Since polynomials approximate any continuous function arbitrarily well on compact sets, networks with generic high-degree polynomial activations are expected to reflect the behavior of general networks \cite{marchetti2025position}. At the same time, the network function is then polynomial in both the input variables and the parameters, so its fibers and differential can be studied by algebraic methods \cite{kileel2019expressivepowerdeeppolynomial,marchetti2025position}. This is the viewpoint of \textit{neuroalgebraic geometry}, which uses polynomial networks as a common framework for comparing architectures, such as convolutional \cite{shahverdi2024geometry,shahverdi2026learningrazorsedgeidentifiability,hendi2026geometry} and attention-based \cite{henry2024geometry} networks, and for isolating the effect of individual design choices, like the inclusion of skip connections.

We show that ResNets of arbitrary depth with a generic polynomial activation of degree at least five have no global parameter symmetries: a generic parameter is uniquely determined by its network function (Section~\ref{sec:identifiability}). In fully-connected networks without skip connections, by contrast, the neurons of each layer can be permuted without changing the network function. The global symmetries of such plain networks have been characterized for sigmoidal \cite{fefferman1994reconstructing,vlavcic2022neural} and hyperbolic tangent \cite{sussmann1992uniqueness,vlavcic2021affine} activations. For nonhomogeneous polynomial activations, proofs are recent and require large degrees \cite{kohn2026linear,hendi2026linear}; even for monomial activations, all known proofs have similar degree bounds \cite{kileel2019expressivepowerdeeppolynomial,finkel2025activation,cruaciun2025linear,usevich2025identifiability,massarenti2025alexander}. For comparison with this literature, we also determine all global parameter symmetries of ResNets with monomial activations.

We further study the critical points of the ResNet parametrization (Section~\ref{sec:criticality}). These are central objects in singular learning theory \cite{watanabe2009algebraic}. They can give rise to equilibria of the gradient flow, and thereby affect training dynamics \cite{amari2006singularities,wei2008dynamics,trager2019pure} 
as well as generalization 
\cite{wei2022deep,marchetti2025position}. They have been studied for plain networks with polynomial and monomial activations \cite{shahverdi2026learningrazorsedgeidentifiability,kohn2026singularlearningoccamsrazor}. We show that critical points do not arise away from degenerate parameters, and that they can cause singularities on the function space. 

All our results hold for ResNets with and without biases.

\section{Preliminaries}
\label{sec:preliminaries}

Throughout, the base field is $\mathbb{R}$, passing to $\mathbb{C}$ in proofs where needed, and $L , d$ are positive integers denoting the depth and width of networks, respectively. We write $I$ for the $d \times d$ identity matrix. Unless a more restrictive hypothesis is stated, the activation is a real polynomial \begin{equation}
\sigma(z)\mathrel{:=}\sum_{k = 0}^r a_k z^k , \quad a_k \in \mathbb{R} , \quad r\geq 2 , \quad a_r \neq 0 .
\label{eq:general-activation}
\end{equation} 
In the \textit{biased} setting, the biases are free parameters, and the parameter space is \[
\pasp\mathrel{:=}\left(\mathbb{R}^{d \times d} \times \mathbb{R}^d\right)^L , \quad \text{ with elements } \pa\mathrel{:=}\left(W_l , b_l\right)_{l = 1}^L .
\] In the \textit{unbiased} setting, every bias is fixed at zero, and we use $\pasp\mathrel{:=}(\mathbb{R}^{d \times d})^L$, with $\pa\mathrel{:=}(W_l)_{l = 1}^L$. 
Polynomial activations and powers of vectors are applied componentwise. Similarly, for two vectors $z , w \in \mathbb{R}^d$, $z * w$ denotes their Hadamard (componentwise) product. A polynomial vector has degree at most $N$ if every coordinate does.

\begin{definition}
A \textit{plain network} of depth $L$ and width $d$ is the function \mbox{$h_L:   \mathbb{R}^d\to \mathbb{R}^d$} defined recursively by \[
h_0 \left(x\right)\mathrel{:=} x , \quad h_l \left(x\right)\mathrel{:=}\sigma(W_l h_{l - 1} \left(x\right)+ b_l)
\]  for $1\leq l\leq L$. A \textit{ResNet} of depth $L$ and width $d$ is the function \mbox{$f_L: \mathbb{R}^d\to \mathbb{R}^d$} defined recursively by 
\begin{equation}
    f_0 \left(x\right)\mathrel{:=} x , \quad f_l \left(x\right)\mathrel{:=}\sigma(W_l f_{l - 1} \left(x\right)+ b_l)+ f_{l - 1} \left(x\right). \label{eq:residual}
\end{equation}
  The summand $f_{l - 1}$ is the \textit{skip connection}, and $W_l f_{l - 1} + b_l$ is the \textit{preactivation} of layer $l$.\end{definition}

Given a parameter $\pa \in \pasp$, we write $f_{\pa}$ for the ResNet $f_L$ defined as above. The \textit{ResNet parametrization} is the map \[
\psi : \pasp \rightarrow \mathbb{R}\left[x_1 ,\ldots , x_d\right]_{\leq r^L}^d , \quad \pa \mapsto f_{\pa}.
\] Here, $\mathbb{R}\left[x_1 ,\ldots , x_d\right]_{\leq r^L}$ is the finite-dimensional vector space of polynomials of degree at most $r^L$. 

\begin{definition}
The \textit{neuromanifold} is the image \[
\mathcal{M}\mathrel{:=}\psi(\pasp). 
\] 
 We call $\mathcal{M}$ the \textit{biased neuromanifold} when $\pasp =(\mathbb{R}^{d \times d} \times \mathbb{R}^d)^L$, and the \textit{unbiased neuromanifold} when $\pasp =(\mathbb{R}^{d \times d})^L$ and every $b_l = 0$. We use $\psi$ and $\mathcal{M}$ for the parametrization and neuromanifold in the stated setting. \end{definition}

Note that the parametrization  $\psi$ is not only polynomial in the input variables ($x_1 , \ldots , x_d$), but also in the parameters.
Therefore, the neuromanifold $\mathcal{M}$ is a semi-algebraic set by the Tarski--Seidenberg theorem. 
We will sometimes consider the Zariski closure $\overline{\mathcal{M}}$ of the neuromanifold, where the closure is taken in $\mathbb{R}\left[x_1 ,\ldots , x_d\right]_{\leq r^L}^d$, referring to $\overline{\mathcal{M}}$ as the \textit{neurovariety}.
The variety $\overline{\mathcal{M}}$ is irreducible: $\pasp$ is irreducible, and the closure of its image under a morphism is irreducible.

This work focuses on the fibers of the ResNet parametrization $\psi$, where the \textit{fiber over a parameter} $\pa \in \pasp$ is the preimage $\psi^{- 1}\left(\psi(\pa)\right)$. Fibers are taken in the real parameter space. When we pass to $\mathbb{C}$ in proofs, we say so explicitly and speak of the \textit{complex fiber}, the preimage of $\psi(\pa)$ under the complexified parametrization.
Our main results describe the generic fibers. A property of parameters is \textit{generic} if it holds on a nonempty Zariski-open subset of $\pasp$.

We further investigate critical points of the polynomial map $\psi$.
We denote by $J_{\psi} \left(\pa\right)$ the differential of $\psi$ at $\pa \in \pasp$.
A parameter $\pa$ is \textit{critical} if $\text{rank} J_{\psi} \left(\pa\right)$ is smaller than the maximum rank attained on $\pasp$; otherwise it is \textit{regular}. The maximum rank occurs on a nonempty Zariski-open set, because the rank conditions are given by minors.

\section{Identifiability}
\label{sec:identifiability}

The identifiability problem asks which parameters represent the same network function; that is, it asks for a description of the fibers of $\psi$. Our three main results describe the generic fiber, i.e., $\psi^{-1}(\psi(\pa))$ for generic $\pa \in \pasp$, for networks with and without biases. Our first theorem shows that,  for polynomial activations with generic coefficients, the fiber is a singleton in both settings. For monomial activations, and more generally the activations in the latter two theorems, the fiber consists of root-of-unity rescalings, with a bias translation when needed. 

\begin{theorem}
Let $\sigma$ have degree $r\geq 5$ and generic coefficients. Then the (complex) fiber over a generic parameter is a singleton. This holds for both the biased and the unbiased parametrization. \label[theorem]{thm:generic-activation} \end{theorem}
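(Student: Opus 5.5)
We argue by induction on the depth $L$, peeling off the \emph{last} layer. Suppose $f_{\pa}=f_{\alt{\pa}}$ with $\pa$ generic and $\alt{\pa}$ arbitrary in the complex parameter space. Write $g:=f_{L-1}$ for the depth-$(L-1)$ ResNet of $\pa$ and $\alt g$ for that of $\alt{\pa}$.

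\textbf{Step 1: the leading form determines everything from the top down.} The degree-$r^L$ homogeneous part of $f_L$ is
\[
a_r(W_L\,G)^{r}+\dots,
\]
where $G$ is the top-degree part of $g$. Since $r\ge2$, the skip term has lower degree and does not contribute. In fact the top form of $f_{\pa}$ is $a_r^{1+r+\dots+r^{L-1}}(W_L(W_{L-1}(\cdots W_1x)^r\cdots)^r)^r$, the top form of a plain monomial network. For generic $\pa$ this equals the corresponding expression for $\alt{\pa}$. By known identifiability of plain monomial networks at generic points (cited in the introduction), $\alt W_l$ agrees with $W_l$ up to permutation and root-of-unity scalings of neurons, and also not all weight matrices can be degenerate. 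Rather than relying on that, we prefer a direct route that uses the genericity of the coefficients $a_k$ to kill permutations.

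\textbf{Step 2: recover $\sigma(W_Lg+b_L)$ and $g$ via the skip structure.} Write
\[
f=\sigma(W_Lg+b_L)+g .
\]
Consider the homogeneous components of degrees strictly between $r^{L-1}$ and $r^L$. These come only from $\sigma(W_Lg+b_L)$, and they involve the monomials $a_k(W_Lg+b_L)^k$ for $k\ge2$. Comparing the two top degrees $r\cdot r^{L-1}$ and $(r-1)r^{L-1}$ (i.e.\ $k=r$ and $k=r-1$), we get $a_rP^r$ and $a_{r-1}P^{r-1}$ plus lower mixed terms, where $P$ is the top form of $W_Lg$. The coordinates satisfy $\alt P_i^r=P_i^r$ and $\alt P_i^{r-1}=P_i^{r-1}$ after matching. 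For a generic $\pa$ each $P_i$ is nonzero, so dividing gives $\alt P_i=P_i$ with no root of unity left. Iterating through the degrees $(r-j)r^{L-1}$, and using the genericity of the ratios $a_{r-j}/a_r$, we determine the full polynomial $W_Lg+b_L$ coordinatewise. Here $r\ge5$ is needed so that enough distinct power-degree bands $r^L,(r-1)r^{L-1},\dots$ lie above the band $\le r^{L-1}$ where $g$ contaminates: the mixed terms from expanding $(P+\text{lower})^k$ must not collide with the bands used for comparison. Once $u:=W_Lg+b_L$ is known, we have $\sigma(u)$, and hence
\[
g=f-\sigma(u).
\]
Generically $W_L$ is invertible, and for generic $\pa$ the coordinates of $g$ together with $1$ are linearly independent. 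Therefore $u=W_Lg+b_L$ determines $W_L$ and $b_L$ uniquely, and equality $g=\alt g$ passes to depth $L-1$. There the induction hypothesis applies, since a generic $\pa$ has generic truncation. The base case $L=1$ is the same argument with $g=x$. The unbiased case is identical with $b_l=0$.

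\textbf{Main obstacle.} The main obstacle is Step 2: showing rigorously that $\alt u=u$ follows, where $\alt{\pa}$ is \emph{not} generic. One has to exclude that $\alt u$ has smaller degree or degenerate coordinates, and that cancellations between $\sigma(\alt u)$ and $\alt g$ mimic the generic structure. We would first bound degrees: the top form forces $\deg\alt u_i=r^{L-1}$ for all $i$. Then we compare the power expansions in the degree bands above $r^{L-1}$ to solve for the homogeneous components of $\alt u$ one at a time, from the top down. The condition $r\ge5$ together with generic $a_k$ should make this triangular system uniquely solvable; the precise band-counting is where we expect the real work.
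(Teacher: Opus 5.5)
Your Step 2 has a genuine gap. The problem is not unfinished band-counting: the mechanism you propose for eliminating the root of unity is false.

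With $R:=r^{L-1}$ and $u=P+u'$, the term $a_r u^r$ contains $r a_r P^{r-1}u'$, which lives in degrees $(r-1)R$ through $rR-1$. In particular, the degree-$(r-1)R$ component of $\sigma(u)$ contains $r a_r P^{r-1}$ times the constant term of $u$ (which involves $b_L$), besides other mixed products. The bands overlap for every $r$, so no bound such as $r\geq 5$ separates them.

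Moreover, the ambiguity is genuine. Put $\tau:=a_{r-1}/(r a_r)$ and write $\sigma(z)=\sigma_\tau(z+\tau)$ with $\sigma_\tau=\sum_k c_k z^k$ and $c_{r-1}=0$. For a scalar $\omega\neq 1$ with $\omega^r=1$, let $\tilde{\pa}$ agree with $\pa$ except $\tilde W_L:=\omega W_L$ and $\tilde b_L:=\omega(b_L+\tau\mathbf{1})-\tau\mathbf{1}$. Then
\[
f_{\tilde{\pa}}-f_{\pa}=\sum_{k=1}^{r-2}c_k(\omega^k-1)\,(u+\tau\mathbf{1})^k
\]
has degree at most $(r-2)R$. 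So the two outputs agree in every degree above $(r-2)R$, including both bands you compare, while $\tilde P^{r-1}=\omega^{-1}P^{r-1}\neq P^{r-1}$. The coefficient $a_{r-1}$ is absorbed by a bias translation and can never rule out $\omega$ in the biased setting.

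The missing idea is to center first, and then compare whole polynomials rather than bands. Replace $b_l$ by $b_l+\tau\mathbf{1}$ and $\sigma$ by $\sigma_\tau$. Suppose $\sigma_\tau(p)+v=\sigma_\tau(\tilde p)+\tilde v$ with $\deg p=R$ and $\deg\tilde p,\deg v,\deg\tilde v\leq R$. The top forms give $\tilde h=\omega h$ with $\omega^r=1$. If $q:=\omega p-\tilde p\neq 0$, then $c_r(p^r-\tilde p^r)$ has degree $\deg q+(r-1)R>R$, while every other term of $\sigma_\tau(p)-\sigma_\tau(\tilde p)$ has degree at most $(r-2)R$. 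This is exactly where $c_{r-1}=0$ matters: without centering, a constant $q$ (the bias shift above) could cancel against the $c_{r-1}$ term.

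Hence $\tilde p=\omega p$. Then the largest $k\geq 2$ with $c_k(1-\omega^k)\neq 0$ would produce a term of degree $kR>R$, which forces $\omega^{\delta(\sigma_\tau)}=1$. So the root of unity is killed by the \emph{low} coefficients $c_2,c_3$ of the centered activation. These are generically nonzero precisely when $r-1\notin\{2,3\}$, i.e.\ $r\geq 5$, which is the actual role of the degree bound.

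Strong layer independence of $f_{L-1}$, which you already invoke, then gives $\tilde W_L=W_L$, $\tilde b_L=b_L$ and $\tilde f_{L-1}=f_{L-1}$, and induction finishes. The unbiased case follows because unbiased fibers are subsets of biased ones.
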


To state the remaining two results, we need some notation. For a polynomial $p\left(z\right)=\sum_{k = 0}^r c_k z^k$ of degree at least two, define \[
\delta(p)\mathrel{:=}\gcd\left\{k\geq 2 : c_k \neq 0\right\}, \; \mu_d \left(q\right)\mathrel{:=}\left\{\text{diag}\left(\omega_1 ,\ldots ,\omega_d\right):\omega_i^q = 1\right\} \subseteq \mathbb{C}^{d \times d}.
\] We write $\mu_d \left(q\right)_{\mathbb{R}}\mathrel{:=}\mu_d \left(q\right)\cap \mathbb{R}^{d \times d}$ for the real elements: diagonal sign matrices if $q$ is even, and $\left\{I\right\}$ if $q$ is odd.

Now, center the activation by setting \[
\tau\mathrel{:=}\frac{a_{r - 1}}{r a_r} , \quad \sigma_{\tau} \left(z\right)\mathrel{:=}\sigma(z -\tau)=\sum_{k = 0}^r c_k z^k .
\] By construction of $\tau$, we have $c_{r - 1} = 0$. Moreover, we write $\mathbf{1}\mathrel{:=}\left(1 ,\ldots , 1\right)^\top$.

\begin{theorem}
Let $\sigma$ have degree $r\geq 3$, and suppose $\sigma'(-\tau)= 0$. Then the fiber over a generic biased parameter $\pa$ is \[
\left\{\left(D_l W_l , D_l \left(b_l +\tau \mathbf{1}\right)-\tau \mathbf{1}\right)_{l = 1}^L : D_l \in \mu_d \left(\delta(\sigma_{\tau})\right)_{\mathbb{R}}\right\}.
\] \label[theorem]{thm:centred-activation} \end{theorem}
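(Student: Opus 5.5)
The plan is to reduce to the centred activation and peel off the last layer, arguing by induction on the depth $L$. Set $\beta_l := b_l + \tau\mathbf{1}$, so that $\sigma(W_l f_{l-1}+b_l) = \sigma_\tau(W_l f_{l-1}+\beta_l)$. In the new coordinates the claimed fiber is $\{(D_lW_l, D_l\beta_l)\}$. Since $\sigma'(-\tau)=\sigma_\tau'(0)=c_1$, the hypothesis says $c_1=0$. By construction also $c_{r-1}=0$. Hence $\sigma_\tau(z)=c_0+\sum_{k\ge 2,\,\delta\mid k} c_k z^k$.

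\emph{Inclusion.} For $\omega^{\delta}=1$ we have $\sigma_\tau(\omega z)=\sigma_\tau(z)$, because only $c_0$ and powers $z^k$ with $\delta\mid k$ occur. So replacing $(W_l,\beta_l)$ by $(D_lW_l,D_l\beta_l)$ leaves every $f_l$ unchanged, by induction on $l$. Only real $D_l$ keep the parameter real. Translating back gives the displayed set.

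\emph{Reverse inclusion.} Let $\pa$ be generic and $\pa'$ any real parameter with $f_{\pa}=f_{\pa'}=:F$. Write $g:=W_Lf_{L-1}+\beta_L$, so that
\[
F=f_{L-1}+\sigma_\tau(g)=W_L^{-1}(g-\beta_L)+\sigma_\tau(g),
\]
with $W_L$ invertible by genericity. Define $g'$ analogously from $\pa'$.

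The first thing to show is that $g$ has degree $N:=r^{L-1}$ and $F$ has degree $rN$. One then checks that $W'_L$ may be assumed invertible; otherwise some coordinate of $F$ would be of the form $\sigma_\tau(g'_i)$ plus a combination of other coordinates, which I would rule out by a degree or genericity argument. Comparing the two expressions for $F$ componentwise gives
\[
\sigma_\tau(g_i)-\sigma_\tau(g'_i)\quad\text{has degree}\le N \text{ for each } i,
\]
since $f_{L-1}$ and $f'_{L-1}$ both have degree at most $N$.

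\emph{Key lemma (main obstacle).} Suppose $g_i$ has degree $N$, is generic in a suitable sense (e.g.\ its top-degree form is not a proper power), and $\sigma_\tau(g_i)-\sigma_\tau(g'_i)$ has degree $\le N$. Then $g'_i=\omega g_i$ with $\omega^{\delta}=1$.

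To prove it, compare top-degree forms: $(\mathrm{top}\,g'_i)^r=(\mathrm{top}\,g_i)^r$ forces $\mathrm{top}\,g'_i=\omega_0\,\mathrm{top}\,g_i$ with $\omega_0^r=1$. Next write $g'_i=\omega_0 g_i+e$ with $\deg e<N$ and expand. The leading new contribution is $r c_r\,\omega_0^{r-1}g_i^{r-1}e$, of degree $(r-1)N+\deg e$. Since $c_{r-1}=0$, no $g^{r-1}$ term can cancel it. For $r\ge 3$ this degree exceeds $N$, so $e=0$. I expect constants to need care: a constant $e$ gives degree $(r-1)N>N$, and this is exactly where both the vanishing of $c_{r-1}$ and $r\ge 3$ are used. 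Then $\sigma_\tau(\omega_0 g_i)-\sigma_\tau(g_i)=\sum_k c_k(\omega_0^k-1)g_i^k$ has degree $\le N$. Because $c_1=0$ and $g_i$ is nonconstant, each $c_k(\omega_0^k-1)$ with $k\ge2$ must vanish. Hence $\omega_0^{k}=1$ for all $k$ with $c_k\ne0$, which means $\omega_0^{\delta}=1$.

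\emph{Conclusion.} The lemma gives $g'=Dg$ with $D\in\mu_d(\delta)$, and $D$ is real since $g,g'$ are real. Then $\sigma_\tau(g')=\sigma_\tau(g)$, so $f'_{L-1}=f_{L-1}$. Next, $W'_Lf_{L-1}+\beta'_L=D(W_Lf_{L-1}+\beta_L)$. The coordinates of $f_{L-1}$ together with $1$ are linearly independent for generic $\pa$, which yields $W'_L=DW_L$ and $\beta'_L=D\beta_L$.

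Apply the induction hypothesis to $f_{L-1}$; the parameters $(W_l,b_l)_{l<L}$ are still generic. The base case $L=1$ is the same argument with $f_0=x$.
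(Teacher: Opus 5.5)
Your proposal is correct and follows essentially the same route as the paper. Both arguments shift the biases by $\tau\mathbf{1}$, compare $\sigma_\tau$ of the final preactivations coordinatewise with a degree argument that uses $c_{r-1}=0$ and $r\ge 3$ (the paper's \Cref{lem:centred-comparison}), recover $W_L$ and $\beta_L$ from strong layer independence, and induct on depth. The invertibility of $W_L$ and $W'_L$ and the assumption that the top form of $g_i$ is not a proper power are never used, so you can drop them. What you do need generically is that every coordinate of $g$ has degree exactly $r^{L-1}$; this is the paper's nondegeneracy condition, established in \Cref{lem:leading-components}.
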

The condition $\sigma'(-\tau)= 0$ says that the centered activation $\sigma_{\tau}$ has zero linear coefficient $c_1$. The fiber is thus obtained by translating the biases by $\tau \mathbf{1}$, rescaling each weight row and the corresponding translated bias entry by the same real root of unity, and then undoing the translation. 
For unbiased networks, the degree bound can be lowered to two.
\begin{theorem}
Let $\sigma$ have degree $r\geq 2$, and suppose $\sigma(0)=\sigma'(0)=0$. Then the fiber over a generic unbiased parameter $\pa$ is \[
\left\{\left(D_l W_l\right)_{l = 1}^L : D_l \in \mu_d \left(\delta(\sigma)\right)_{\mathbb{R}}\right\}.
\] \label[theorem]{thm:identifiability-unbiased} \end{theorem}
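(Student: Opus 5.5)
We prove \Cref{thm:identifiability-unbiased} by induction on the depth $L$, peeling off the \emph{last} layer through the top-degree homogeneous part of $f_{\pa}$. Write $\sigma(z)=\sum_{k=2}^r a_k z^k$; this form uses the hypothesis $\sigma(0)=\sigma'(0)=0$. First, the inclusion ``$\supseteq$'' is immediate. If $D\in\mu_d(\delta(\sigma))_{\mathbb{R}}$, then $\sigma(Dz)=D^{\delta}$-twisted terms give $\sigma(Dz)=\sigma(z)$ componentwise, because $\omega_i^k=1$ for every $k$ with $a_k\neq 0$. Hence each residual block $f_{l-1}\mapsto \sigma(W_l f_{l-1})+f_{l-1}$ is unchanged under $W_l\mapsto D_lW_l$. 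The work is the reverse inclusion for generic $\pa$.

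\textbf{Leading forms.} For generic $\pa$ the top-degree part of $f_l$ is $a_r^{(r^l-1)/(r-1)}\,(W_l(\cdots(W_1x)^r\cdots)^r)^r$, which is the plain monomial network. The skip connection only contributes lower degree, since $r\ge2$. Suppose $f_{\pa}=f_{\alt\pa}$.

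\textbf{Base case $L=1$.} Here $f=\sigma(W_1x)+x$. For each $k\ge2$, the degree-$k$ part is $a_k (W_1x)^k$, so the coordinates $(w_i\cdot x)^k$ for $k\ge 2$ with $a_k\ne0$ are determined. Over $\mathbb{C}$, a form $(w\cdot x)^r$ determines $w$ up to an $r$-th root of unity. Comparing several such $k$ pins $w_i$ down up to $\omega$ with $\omega^k=1$ for all $k$ with $a_k\neq0$, i.e.\ up to $\mu(\delta(\sigma))$. Reality of $\alt W_1$ forces a real root, which gives $\mu_d(\delta(\sigma))_{\mathbb{R}}$.

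\textbf{Inductive step.} Write $f_L=\sigma(W_L f_{L-1})+f_{L-1}$. The key claim is that $f_{L-1}$, or at least $W_L$ up to $\mu_d(\delta)$, is recoverable from $f_L$. The top-degree part of coordinate $i$ of $f_L$ is $a_r (w_{L,i}\cdot g)^r$, where $g$ is the leading form of $f_{L-1}$, of degree $r^{L-1}$. The idea is to show that the $r$-th root $w_{L,i}\cdot g$ is determined up to a root of unity, and then to use the subleading terms.

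Consider the expansion of coordinate $i$ by degree. The part of degree between $r^{L}-r^{L-1}+1$ and $r^L-1$ is governed by $r a_r (w_{L,i}\cdot g)^{r-1}(w_{L,i}\cdot(\text{lower parts of } f_{L-1}))$ together with terms from $a_{r-1}$. These determine the full preactivation $p_i=w_{L,i}\cdot f_{L-1}$ up to the admissible root of unity. This is the step I would carry out by writing $\sigma(p_i)=a_r p_i^r(1+O(p_i^{-1}))$ and extracting an $r$-th root as a formal Laurent/Puiseux expansion in the grading. The genericity hypothesis is that the linear forms $w_{L,i}\cdot g$ are pairwise nonproportional and non-degenerate.

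Knowing $\sigma(p)$ with $p=W_Lf_{L-1}$, we then face a difficulty: the sum $\sigma(p)+f_{L-1}$ mixes degrees, so $p$ must be recovered modulo the ambiguity $f_{L-1}$, which has degree only $r^{L-1}$. Since $r^{L-1}<\deg\sigma(p_i)-\,$(gap), I expect the following. The coefficients of $p_i^r$ above degree $r^{L-1}(r-1)$ pin down $p_i$ completely once $r\ge2$. Then $\sigma(p_i)$ is known exactly, and subtracting gives $f_{L-1}=f_L-\sigma(W_Lf_{L-1})$, where $p=D_L\alt p$. We conclude $f_{L-1}^{\pa}=f_{L-1}^{\alt\pa}$, and $W_Lf_{L-1}=D_L\alt W_L f_{L-1}$. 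Since the coordinates of a generic $f_{L-1}$ are linearly independent, $\alt W_L=D_L^{-1}W_L$. The induction hypothesis then applies, because the truncated parameter $(W_l)_{l<L}$ is generic whenever $\pa$ is.

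\textbf{Main obstacle.} The main obstacle is the root-extraction step. We must show that $p_i$ is uniquely determined, up to the root of unity, from $\sigma(p_i)+(\text{a polynomial of degree}\le r^{L-1})$. Moreover, the allowed roots of unity must match across all degrees, which is where $\delta(\sigma)$ enters. I would first attempt this when $r\ge3$ and $a_r=1$ by a degree-by-degree triangular argument. In that argument, the coefficients of $p_i$ in degree $j$ are determined linearly from the degree-$(r-1)r^{L-1}+j$ part. Here we use that $g$-powers are non-zero divisors, which requires $r^{L-1}(r-1)+j>r^{L-1}$, true for $r\ge2$.
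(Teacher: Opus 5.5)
Your architecture matches the paper's proof. You peel off the last block, prove a scalar statement ``$\sigma(p)+u=\sigma(\alt{p})+\alt{u}$ implies $\alt{p}=\omega p$, $\omega^{\delta}=1$, $\alt{u}=u$'' coordinatewise with $p=(W_Lf_{L-1})_i$ and $u=(f_{L-1})_i$, use linear independence of the coordinates of $f_{L-1}$ to get $\alt{W}_L=D_LW_L$, and recurse.

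\textbf{The gap.} The scalar statement is essentially the whole content of the proof, and you leave it as ``I expect'' and ``I would first attempt this when $r\ge3$'', although the theorem includes $r=2$. Your triangular recursion does close it, but only with the ingredient the paper relies on: $p(0)=\alt{p}(0)=0$. This holds for every unbiased parameter because $\sigma(0)=0$.

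Concretely, write $p=\sum_j p^{(j)}$ in homogeneous components with $h=p^{(R)}$. The degree-$((r-1)R+j)$ component of $\sigma(p)+u$ equals $a_r\bigl(rh^{r-1}p^{(j)}+P_j(p^{(j+1)},\dots,p^{(R)})\bigr)$ whenever $(r-1)R+j>\max\{(r-1)R,R\}$, i.e.\ $j\ge1$. The polynomial $\omega p$ satisfies the same equations, since $(\omega p)^r=p^r$. Comparing $\alt{p}$ with $\omega p$ then gives $\alt{p}^{(j)}=\omega p^{(j)}$ by downward induction on $j\ge1$, dividing by $\omega^{r-1}h^{r-1}\neq 0$.

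The inequality you quote, $r^{L-1}(r-1)+j>r^{L-1}$, fails at $j=0$ when $r=2$. There the constant term is masked, because $u$ also lives in degree $R$. Only vanishing at the origin removes this case.

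\textbf{Comparison of the two routes.} The paper proves the same lemma more directly. It factors $p^r-\alt{p}^r=q\sum_j(\omega p)^{r-1-j}\alt{p}^j$ with $q=\omega p-\alt{p}$. Since $q(0)=0$ forces $\deg q\ge 1$, this difference has degree exceeding $(r-1)R$, so nothing else can cancel it. Your recursion buys a constructive reconstruction of $p$, but it needs the bookkeeping above.

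\textbf{Further points you still need.} First, the degree bounds $\deg\alt{p}_i\le R$ and $\deg\alt{f}_{L-1}\le R$ must hold for the \emph{arbitrary} competitor $\alt{\pa}$. They do, by the recursion alone. Moreover, $\deg\alt{p}_i=R$ is forced by the degree $rR$ of the output. The argument must compare two candidates, not ``recover'' a generic one.

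Second, you need $\omega^{\delta}=1$ in the inductive step, not just in the base case. Take the largest $k$ with $a_k(\omega^k-1)\neq 0$; it would give $\sigma(\omega p)-\sigma(p)$ degree $kR>R$, a contradiction. Only after this do you get $\alt{f}_{L-1}=f_{L-1}$.

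\textbf{Smaller inaccuracies.} The term $a_{r-1}p^{r-1}$ has degree at most $(r-1)R$, so it contributes nothing in degrees $(r-1)R+1,\dots,rR-1$. The polynomials $w_{L,i}\cdot g$ are forms of degree $r^{L-1}$, not linear forms. Pairwise nonproportionality is never needed, only their nonvanishing (nondegeneracy).

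Finally, linear independence of the coordinates of $f_{L-1}$ need not be assumed generically. It holds at every parameter, since $f_l=x+(\text{terms of degree}\ge 2)$. Hence the fiber description holds at every nondegenerate parameter, as the paper shows.
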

The last two theorems apply in particular to monomial activations:
\begin{corollary}\label[corollary]{cor:monomial-activation}
    For $\sigma(z)= z^r$, $r \geq 3$, the fiber over a generic biased parameter $\pa$ is \[
\left\{\left(D_l W_l , D_l b_l \right)_{l = 1}^L : D_l \in \mu_d \left(r\right)_{\mathbb{R}}\right\}.
\] For $\sigma(z)=z^r$, $r\geq 2$, the fiber over a generic unbiased parameter $\pa$ is \[
\left\{\left(D_1 W_1,\ldots,D_L W_L\right):D_l\in\mu_d(r)_{\mathbb{R}}\right\}.
\]
\end{corollary}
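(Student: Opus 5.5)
The corollary follows by specializing \Cref{thm:centred-activation} and \Cref{thm:identifiability-unbiased} to $\sigma(z)=z^r$. The only work is to compute the quantities $\tau$ and $\delta$ for the monomial and check the hypotheses.

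\emph{Biased case.} Take $\sigma(z)=z^r$ with $r\geq 3$. Then $a_r=1$ and $a_{r-1}=0$, so
\[
\tau=\frac{a_{r-1}}{r a_r}=0 .
\]
Hence $\sigma_\tau=\sigma$. The hypothesis of \Cref{thm:centred-activation} is then
\[
\sigma'(-\tau)=\sigma'(0)=r\cdot 0^{r-1}=0,
\]
which holds because $r\geq 2$. The only nonzero coefficient of $\sigma_\tau$ with index $k\geq 2$ is $c_r=1$, so $\delta(\sigma_\tau)=\gcd\{r\}=r$. Substituting $\tau=0$ into the fiber described in \Cref{thm:centred-activation}, the bias component becomes
\[
D_l(b_l+0\cdot\mathbf 1)-0\cdot\mathbf 1=D_l b_l ,
\]
and the matrices $D_l$ range over $\mu_d(r)_{\mathbb R}$. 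This is exactly the first claimed fiber.

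\emph{Unbiased case.} Take $r\geq 2$. Then $\sigma(0)=0^r=0$, and $\sigma'(0)=r\,0^{r-1}=0$ since $r-1\geq 1$. So \Cref{thm:identifiability-unbiased} applies. As before, $\delta(\sigma)=r$, and the theorem gives the fiber $\{(D_lW_l)_{l=1}^L: D_l\in\mu_d(r)_{\mathbb R}\}$, which is the second claim.

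There is no real obstacle. The only points needing care are that $a_{r-1}=0$ forces $\tau=0$ (including when $r-1=0$ is not an issue, since $r\geq 2$), and that the gcd defining $\delta$ runs over a single index, namely $r$.
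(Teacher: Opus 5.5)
Your proposal is correct and follows the paper's proof exactly: for $\sigma(z)=z^r$ you compute $\tau=0$ and $\delta(\sigma_\tau)=\delta(\sigma)=r$, and check $\sigma'(0)=0$ (and $\sigma(0)=0$ in the unbiased case). You then read the two fibers off \Cref{thm:centred-activation} and \Cref{thm:identifiability-unbiased}.
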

\begin{proof}
    In the biased case, we have $\tau = 0$ and $\delta(\sigma_{\tau})= r$, so the claim follows from \Cref{thm:centred-activation}. In the unbiased case, $\delta(\sigma)=r$ and $\sigma(0)=\sigma'(0)=0$, so \Cref{thm:identifiability-unbiased} applies.
\end{proof}
Our fiber theorems determine the dimension of the neuromanifold, which measures the expressivity and sample complexity of the network \cite{marchetti2025position}:
\begin{corollary}
Under the hypotheses of \Cref{thm:generic-activation,thm:centred-activation} on the activation, the biased neuromanifold has dimension $L\left(d^2 + d\right)$. Under the hypotheses of \Cref{thm:generic-activation,thm:identifiability-unbiased} on the activation, the unbiased neuromanifold has dimension $L d^2$.  \label[corollary]{cor:dimension} \end{corollary}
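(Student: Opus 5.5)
The plan is to use the standard principle that the dimension of the image of a polynomial map equals the dimension of the source minus the dimension of the generic fiber. Write $\psi$ for the complexified parametrization in either setting. The neurovariety $\overline{\mathcal{M}}$ is irreducible, and $\psi$ is a dominant morphism from the irreducible space $\pasp_{\mathbb{C}}$ onto it. By the fiber dimension theorem, there is a nonempty Zariski-open subset of $\pasp_{\mathbb{C}}$ on which the fiber dimension equals $\dim \pasp - \dim \overline{\mathcal{M}}$. Equivalently, the generic rank of $J_\psi$ equals $\dim\overline{\mathcal{M}}$; this holds by generic smoothness in characteristic zero. Since $\mathcal{M}$ is semialgebraic with Zariski closure $\overline{\mathcal{M}}$, its real dimension equals $\dim \overline{\mathcal{M}}$. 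So it suffices to show that the generic fiber is finite, i.e.\ zero-dimensional.

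Consider first the hypotheses of \Cref{thm:generic-activation}. The complex fiber over a generic parameter is a singleton, hence finite. The generic locus there is a nonempty Zariski-open subset of the real parameter space, and it is therefore Zariski dense in $\pasp_{\mathbb{C}}$. The fiber dimension is upper semicontinuous and its minimum is attained on a dense open set, so the generic fiber dimension is $0$. This gives $\dim\mathcal{M} = \dim\pasp$, namely $L(d^2+d)$ in the biased setting and $Ld^2$ in the unbiased setting.

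Under the hypotheses of \Cref{thm:centred-activation} or \Cref{thm:identifiability-unbiased}, the theorems describe only the real fiber. That fiber is finite, of cardinality at most $|\mu_d(\delta)_{\mathbb{R}}|^L$, but a finite real fiber alone does not rule out a positive-dimensional complex fiber. Instead I would argue via the Jacobian. Suppose the generic rank of $J_\psi$ were less than $\dim\pasp$. Then at a generic real parameter $\pa$ (real points are Zariski dense), $\pa$ is a regular point of the real map. By the constant rank theorem, the real fiber through $\pa$ contains a real submanifold of positive dimension. This contradicts the finiteness of the real fiber given by the theorems. Hence $J_\psi$ has full rank generically, and the dimension is $\dim\pasp$.

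The step requiring the most care is this last one: ensuring that a finite real generic fiber forces full generic rank. The key point is that the constant rank theorem applies on the open set where the rank is maximal, and that this set meets the generic set of the theorem. Both sets are nonempty Zariski-open subsets of the real parameter space, so they intersect.
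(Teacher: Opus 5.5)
Your proof is correct. For the setting of \Cref{thm:generic-activation} it coincides with the paper's argument: $\psi$ is a dominant morphism of irreducible varieties with a finite complex fiber, so the fiber-dimension theorem gives $\dim\overline{\mathcal{M}}=\dim\pasp$. The semialgebraic image then has the same dimension as its Zariski closure.

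You differ in the other two settings. The paper cites ``generically finite fibers'' from the identifiability results and applies the fiber-dimension theorem again. This tacitly uses the complex fiber. That fiber is stated explicitly in \Cref{thm:biased-fiber}, where it has $\delta(\sigma_\tau)^{Ld}$ elements. It is also what the proof of \Cref{thm:identifiability-unbiased} actually computes, since that proof works over $\mathbb{C}$.

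You instead take \Cref{thm:centred-activation,thm:identifiability-unbiased} at face value, as giving only finiteness of the real fiber. You bridge the gap with the constant rank theorem: on the Zariski-open set where $\operatorname{rank} J_\psi$ is maximal, the rank is locally constant. A deficient generic rank would then make each such real fiber contain a positive-dimensional submanifold. This contradicts finiteness on a second nonempty Zariski-open set, which must meet the first.

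This is sound and slightly stronger. It shows that generic finiteness of real fibers alone forces full generic rank of $J_\psi$, and hence generically finite complex fibers, without inspecting the proofs. The cost is a detour through the constant rank theorem and generic smoothness, where the paper needs only the fiber-dimension theorem. The generic injectivity of $J_\psi$ that you extract from fiber finiteness is also proved directly later, by the degree arguments of \Cref{prop:regularity-bias,prop:regularity-unbiased}.
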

\begin{proof}
In each case, the parametrization $\psi : \pasp \rightarrow \overline{\mathcal{M}}$ is dominant between irreducible varieties and has generically finite fibers by the corresponding identifiability result. The fiber-dimension theorem \cite[Sec. 12.4]{vakil2025rising} gives $\dim \pasp=\dim \overline{\mathcal{M}} = \dim \mathcal{M}$.
\end{proof}

\Cref{thm:generic-activation,thm:centred-activation} follow from our master theorem on fibers, \Cref{thm:biased-fiber}, which describes the fiber over every parameter in an explicit Zariski-open set and counts its elements. We prove them in \Cref{sec:proof-main}, and prove \Cref{thm:identifiability-unbiased} in \Cref{sec:proof-unbiased}.

\subsection{Comparison with plain networks}
For plain networks with monomial activation of sufficiently large degree $r$, the generic fiber of the network parametrization consists of parameters obtained by neuron permutations and neuronwise rescalings in each hidden layer \cite{finkel2025activation, usevich2025identifiability}.
The known degree bounds are quadratic in the network width; for example, \cite{finkel2025activation} gives the bound $r > 6D^2 - 6D + 1$, where $D := 2d - 1$.
In contrast, \Cref{cor:monomial-activation} characterizes the generic fiber for ResNets for $r \geq 3$ in the biased case and $r \geq 2$ in the unbiased case, independently of depth and width.
The finite fibers in this result show that skip connections break the continuous parameter symmetries present in plain networks with monomial activations.

For unbiased plain networks with generic polynomial activation, it was recently shown that the generic fiber of the network parametrization map consists of parameters obtained by neuron permutations in each hidden layer \cite{hendi2026linear}.
Their proof requires a degree bound similar to the monomial case, namely $r \geq 10 d^2 -9d+4$.
For deep ResNets, \Cref{thm:generic-activation} provides the generic fiber for $r \geq 5$, a bound independent of depth and width.
This result shows that skip connections reduce parameter symmetries, yielding generic uniqueness.
We note that our proof techniques for ResNets are simpler than those of \cite{hendi2026linear} for plain networks.

\subsection{Proof of the main theorems}
\label{sec:proof-main}

We now prove \Cref{thm:generic-activation} and \Cref{thm:centred-activation}. To state the master theorem on fibers from which both follow, we first introduce two conditions on parameters: nondegeneracy and layer independence.

Given a ResNet with an activation function $\sigma$ of degree $r \geq 2$, the auxiliary \textit{monomial plain network} has outputs $g_l$ and preactivations $q_l$ defined by \begin{equation}
g_0 \left(x\right)\mathrel{:=} x , \quad q_l \left(x\right)\mathrel{:=} W_l g_{l - 1} \left(x\right), \quad g_l \left(x\right)\mathrel{:=} q_l \left(x\right)^r
\label{eq:monomial-network}
\end{equation} for $1\leq l\leq L$. Its coordinates $\left(g_l\right)_i$ are called \textit{neurons}; a neuron is \textit{dead} if it is the zero polynomial. Thus, this network is obtained from \eqref{eq:residual} by deleting the skip connections and biases and replacing $\sigma(z)$ by $z^r$.

\begin{definition}
A parameter $\pa$ is \textit{nondegenerate} if $\left(g_l\right)_i \neq 0$ for every layer $1\leq l\leq L$ and coordinate $1\leq i\leq d$ in \eqref{eq:monomial-network}. Otherwise it is \textit{degenerate}. \label[definition]{def:nondegenerate} \end{definition}

The definition depends on the activation degree $r$, which is fixed throughout each statement.

We point out that nondegeneracy is a generic property. Indeed, the equation $\left(g_l\right)_i = 0$ sets finitely many polynomial functions in the network weights equal to zero, one for each coefficient in $\left(g_l\right)_i$. The degenerate locus is the finite union of these closed sets. It is proper because the identity weights (with zero biases) are nondegenerate, as is easily verified.

The following statements use the notation introduced in \eqref{eq:general-activation} and \eqref{eq:residual}.

\begin{lemma}
Let $\pa$ be nondegenerate. At each layer $l$, every coordinate of $W_l f_{l - 1} + b_l$ has degree $r^{l - 1}$, and every coordinate of $f_l$ is a polynomial of degree $r^l$. Their homogeneous components of these degrees are $\kappa_{l - 1} q_l$ and $\kappa_l g_l$, respectively, where \[
\kappa_0\mathrel{:=} 1 , \quad \kappa_l\mathrel{:=} a_r \kappa_{l - 1}^r .
\] Moreover, every row of every $W_l$ is nonzero. \label[lemma]{lem:leading-components} \end{lemma}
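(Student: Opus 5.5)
We argue by induction on the layer index $l$, proving simultaneously that each coordinate of $f_{l-1}$ has degree exactly $r^{l-1}$ with top homogeneous component $\kappa_{l-1} g_{l-1}$, and then deducing the claims for the preactivation and for $f_l$. The base case $l-1=0$ is immediate: $f_0 = x = g_0$ is homogeneous of degree $1 = r^0$, and $\kappa_0 = 1$.

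For the inductive step, assume every coordinate of $f_{l-1}$ has degree at most $r^{l-1}$ and that its degree-$r^{l-1}$ homogeneous component is $\kappa_{l-1}(g_{l-1})$. Taking homogeneous parts commutes with linear maps, and the bias $b_l$ has degree $0 < r^{l-1}$ unless $l=1$. Hence the degree-$r^{l-1}$ component of $W_l f_{l-1} + b_l$ is $\kappa_{l-1} W_l g_{l-1} = \kappa_{l-1} q_l$, with all other terms of lower degree. For $l=1$ the component of degree $1$ is still $W_1 x = q_1$, since the bias sits in degree $0$.

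To see that this component is nonzero in each coordinate, we use nondegeneracy. Since $(g_l)_i = (q_l)_i^r \neq 0$, we get $(q_l)_i \neq 0$. Moreover $\kappa_{l-1} \neq 0$, because $a_r \neq 0$ and $\kappa$ is built from $a_r$ by products and powers. So each coordinate of the preactivation has degree exactly $r^{l-1}$, with leading component $\kappa_{l-1}(q_l)_i$. This also gives the final claim: if some row of $W_l$ were zero, then $(q_l)_i = 0$, contradicting nondegeneracy.

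Next apply $\sigma$ coordinatewise. Write $p = \kappa_{l-1}(q_l)_i + (\text{lower})$. Then $\sigma(p) = a_r p^r + \sum_{k<r} a_k p^k$. Here $a_r p^r$ has top component $a_r \kappa_{l-1}^r (q_l)_i^r = \kappa_l (g_l)_i$ in degree $r^l$, and this is nonzero because a power of a nonzero polynomial is nonzero (the ring is a domain). The terms with $k<r$ have degree at most $(r-1)r^{l-1} < r^l$. Finally, the skip connection $f_{l-1}$ has degree $r^{l-1} < r^l$, using $r \ge 2$. So $f_l$ has degree exactly $r^l$ with top component $\kappa_l g_l$, which completes the induction.

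The only delicate point is the nonvanishing of the leading components. A priori they could cancel, but nondegeneracy rules this out, together with $a_r \neq 0$ and the fact that $\mathbb{R}[x]$ has no zero divisors. The rest is bookkeeping of degrees.
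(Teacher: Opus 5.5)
Your proof is correct and follows the paper's own argument. Both proceed by induction on $l$, tracking the top homogeneous components, and use nondegeneracy, $a_r \neq 0$, and the absence of zero divisors to rule out cancellation; you additionally spell out the degree bookkeeping the paper leaves implicit (the bias, the terms $a_k p^k$ with $k<r$, and the skip connection all lie strictly below the top degree since $r \geq 2$).
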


\begin{proof}
The formulas for homogeneous components clearly hold for $l = 1$. For $l\geq 2$, assume that the leading components of $W_{l - 1} f_{l - 2} + b_{l - 1}$ and $f_{l - 1}$ are $\kappa_{l - 2} q_{l - 1}$ and $\kappa_{l - 1} g_{l - 1}$, respectively. Now recall that $q_l = W_l g_{l - 1}$ and $g_l = q_l^r$. Therefore the leading component of $W_l f_{l - 1} + b_l$ is $W_l \kappa_{l - 1} g_{l - 1} = \kappa_{l - 1} q_l$, while the leading component of $f_l$ is $a_r \left(\kappa_{l - 1} q_l\right)^r = \kappa_l g_l$. Nondegeneracy and $\kappa_l \neq 0$ give both degree assertions.

A zero row of $W_l$ would make a coordinate of $q_l$, hence of $g_l$, zero. \end{proof}

\begin{definition}
A parameter is \textit{layer-independent} if the coordinates of $f_l$ are linearly independent for every $0\leq l < L$. It is \textit{strongly layer-independent} if these coordinates together with the constant polynomial $1$ are linearly independent for every $0\leq l < L$. \label[definition]{def:layer-independence} \end{definition}

Both properties are generic, as we now justify. Writing each $\left(f_l\right)_i$ in the monomial basis, (strong) layer independence means that the resulting coefficient matrix has full column rank, which is a Zariski-open condition. Moreover, the identity weights with zero biases are (strongly) layer-independent. Indeed, there, each coordinate $\left(f_l\right)_i$ depends only on $x_i$ and has degree $r^l$ with nonzero leading coefficient. Comparing the coefficients of $x_i^{r^l}$ in an affine relation proves strong layer independence. All in all, (strong) layer independence and nondegeneracy hold simultaneously on a nonempty Zariski-open set of network weights.

We say  a polynomial $p\left(z\right)=\sum_{k = 0}^r c_k z^k$ is \textit{invariant under} $\mu_d \left(q\right)$ if  $p\left(D z\right)= p\left(z\right)$ for all $D \in \mu_d \left(q\right)$, applying $p$ componentwise. Invariance under $\mu_d \left(\delta(p)\right)$ holds precisely when $c_1 = 0$ or $\delta(p)= 1$: all nonlinear terms are already invariant, while a nonzero linear term requires every allowed root to equal $1$.

We are now ready to state our master theorem on fibers.

\begin{theorem}
Let $\sigma$ have degree $r\geq 3$, and suppose $\sigma_{\tau}$ is invariant under $\mu_d \left(\delta(\sigma_{\tau})\right)$. For a nondegenerate, strongly layer-independent biased parameter $\pa$, its fiber is exactly \[
\left\{\left(D_l W_l , D_l \left(b_l +\tau \mathbf{1}\right)-\tau \mathbf{1}\right)_{l = 1}^L : D_l \in \mu_d \left(\delta(\sigma_{\tau})\right)_{\mathbb{R}}\right\}.
\] It has $2^{L d}$ elements if $\delta(\sigma_{\tau})$ is even, and is a singleton if $\delta(\sigma_{\tau})$ is odd. The complex fiber is given by the same formula with $D_l \in \mu_d \left(\delta(\sigma_{\tau})\right)$ and has $\delta(\sigma_{\tau})^{L d}$ elements. \label[theorem]{thm:biased-fiber} \end{theorem}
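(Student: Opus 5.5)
The plan has two parts: show that the listed parameters lie in the fiber, then show the fiber contains nothing else by peeling off layers from the top.

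\emph{Step 1: the listed parameters are in the fiber.} Write the preactivation as $W_l f_{l-1}+b_l = u_l-\tau\mathbf{1}$ with $u_l := W_l f_{l-1}+b_l+\tau\mathbf{1}$. Then $\sigma(W_l f_{l-1}+b_l)=\sigma_\tau(u_l)$. Replacing $(W_l,b_l)$ by $(D_lW_l, D_l(b_l+\tau\mathbf{1})-\tau\mathbf{1})$ replaces $u_l$ by $D_l u_l$. By the invariance hypothesis, $\sigma_\tau(D_lu_l)=\sigma_\tau(u_l)$. Induction on $l$ then shows every $f_l$ is unchanged.

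The counts follow from injectivity of $(D_l)\mapsto$ parameter. By Lemma~\ref{lem:leading-components}, every row of $W_l$ is nonzero, so $D_lW_l=D'_lW_l$ forces $D_l=D'_l$. Also $|\mu_d(q)|=q^d$, while $|\mu_d(q)_{\mathbb R}|$ is $2^d$ for even $q$ and $1$ for odd $q$.

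\emph{Step 2: the fiber contains nothing else.} Suppose $\tilde{\pa}$ (possibly complex) satisfies $f_{\tilde{\pa}}=f_{\pa}$. I would induct downward on the layers: show $\tilde f_{L-1}=f_{L-1}$ and that layer $L$ has the stated form, then recurse. The recursion is legitimate because $f_{L-1}$ is the ResNet of the truncated parameter, which is again nondegenerate and strongly layer-independent.

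The key difficulty is that $\tilde{\pa}$ is not assumed nondegenerate, so the degree of $\tilde f_l$ is not a priori controlled. I would handle this by first proving by induction on $l$ that $\deg \tilde f_l\le r^l$ with leading form $\kappa_l\tilde g_l$, where $\tilde g_l$ is possibly zero. Then
\[
f_L=\tilde f_L=\sigma_\tau(\tilde u_L)+\tilde f_{L-1}.
\]
Here $\deg\tilde f_{L-1}\le r^{L-1}$, while $f_L$ has degree exactly $r^L$ in every coordinate. Hence each coordinate of $\tilde u_L$ has degree exactly $r^{L-1}$, and so has $\tilde g_{L-1}\ne0$. Comparing top-degree parts gives
\[
a_r(\kappa_{L-1}\tilde q_L)^r=a_r(\kappa_{L-1}q_L)^r,
\]
so $\tilde q_L=\omega_i\, q_L$ coordinatewise for some $r$-th roots of unity $\omega_i$.

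\emph{Step 3 (main obstacle): pinning down the lower layers.} I need to show $\tilde u_L=D_Lu_L$ with $D_L\in\mu_d(\delta)$, and that $\tilde f_{L-1}=f_{L-1}$. The plan is to expand $\sigma_\tau(\tilde u)-\sigma_\tau(u)$ coordinatewise, with $c_{r-1}=0$. Because $c_{r-1}=0$, the next-to-top term $rc_r u^{r-1}(\tilde u-\omega u)$ cannot be cancelled by any $c_{r-1}$ contribution. This is where $r\ge3$ is used: the degree of $u^{r-1}\cdot(\text{error})$ exceeds $r^{L-1}$, which is the maximal degree of $f_{L-1}-\tilde f_{L-1}$.

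I would prove $\tilde u_i=\omega_iu_i$ by a descending induction on homogeneous degree. Write $\tilde u_i=\omega_iu_i+e_i$ with $e_i$ of degree $m<r^{L-1}$. The highest-degree term of $\sigma_\tau(\tilde u_i)-\sigma_\tau(u_i)$ is then $r c_r\omega_i^{r-1}u_i^{r-1}e_i$, of degree $(r-1)r^{L-1}+m>r^{L-1}$. This term must vanish, so $e_i=0$. Once $\tilde u_i=\omega_iu_i$, comparing $\sigma_\tau(\omega_i u_i)=\sigma_\tau(u_i)$ is a polynomial identity in $u_i$, which is nonconstant, so $c_k\omega_i^k=c_k$ for all $k$; hence $\omega_i^{\delta}=1$ (and the linear term is consistent by the invariance hypothesis). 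It follows that $\tilde f_{L-1}=f_{L-1}$.

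Strong layer independence of $f_{L-1}$ then gives the conclusion for layer $L$. The identity
\[
\tilde W_Lf_{L-1}+\tilde b_L+\tau\mathbf{1}=D_L(W_Lf_{L-1}+b_L+\tau\mathbf{1})
\]
is an affine relation among the coordinates of $f_{L-1}$ and $1$, so it forces $\tilde W_L=D_LW_L$ and $\tilde b_L+\tau\mathbf{1}=D_L(b_L+\tau\mathbf{1})$. Iterating downward handles all layers, and realness of $\tilde W_l$ with $W_l$ having nonzero rows forces $D_l$ to be real.
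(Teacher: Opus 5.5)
Your route is the paper's. You shift the biases to work with $\sigma_\tau$, peel off the last layer by a degree comparison (which the paper isolates as \Cref{lem:centred-comparison}), use strong layer independence to read off $\tilde W_L$ and $\tilde b_L$, and recurse. The converse and the counts come from invariance and the nonzero rows of \Cref{lem:leading-components}. Your a priori bound $\deg\tilde f_l\le r^l$ and your argument forcing the error $e_i$ to vanish are both correct.

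The gap is in the last part of Step 3, and as written it is circular. After $\tilde u_i=\omega_i u_i$ you do not yet know $\sigma_\tau(\omega_i u_i)=\sigma_\tau(u_i)$. That identity is equivalent to $(\tilde f_{L-1})_i=(f_{L-1})_i$, which is exactly what you want to conclude. So you cannot treat it as a polynomial identity in the nonconstant $u_i$, read off $c_k\omega_i^k=c_k$ for all $k$, and then deduce $\tilde f_{L-1}=f_{L-1}$. What you actually have is
\[
\sum_{k} c_k(\omega_i^k-1)\,u_i^k \;=\; (f_{L-1}-\tilde f_{L-1})_i ,
\]
whose right side has degree at most $R:=r^{L-1}=\deg u_i$. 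Suppose $c_k(\omega_i^k-1)\neq 0$ for some $k\ge 2$. Then the largest such $k$ gives the left side degree $kR>R$, a contradiction. This yields $c_k\omega_i^k=c_k$ only for $k\ge 2$, that is, $\omega_i^{\delta(\sigma_\tau)}=1$.

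The linear term $c_1(\omega_i-1)u_i$ has degree $R$, so this comparison cannot detect it. This is where the invariance hypothesis does real work, not as a consistency check. From $\omega_i^{\delta(\sigma_\tau)}=1$ and invariance you get $c_1\omega_i=c_1$, hence $\sigma_\tau(\omega_i u_i)=\sigma_\tau(u_i)$, and only then $\tilde f_{L-1}=f_{L-1}$. This is the final paragraph of the proof of \Cref{lem:centred-comparison}; with it inserted, your argument is complete.
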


\noindent
Before we show this theorem, we explain how it proves \Cref{thm:generic-activation,thm:centred-activation}.

\begin{proof}[Proof of \Cref{thm:generic-activation}]
The coefficients $c_2 , c_3$ of $\sigma_\tau$ are rational functions of $a_0 ,\ldots ,$ $a_r$, with denominators powers of the nonzero coefficient $a_r$. Their simultaneous nonvanishing defines a nonempty Zariski-open set. On this set, $\delta(\sigma_{\tau})= 1$, so the invariance hypothesis is automatic and \Cref{thm:biased-fiber} gives a singleton generic fiber in the biased setting. 
The fibers under the unbiased parametrization are subsets of the biased fibers. 
Since a generic unbiased parameter is nondegenerate and strongly layer-independent, its fiber is a singleton as well.
\end{proof}

\begin{proof}[Proof of \Cref{thm:centred-activation}]
The linear coefficient of $\sigma_{\tau}$ is $c_1 =\sigma'(-\tau)= 0$, so $\sigma_{\tau}$ is invariant under $\mu_d \left(\delta(\sigma_{\tau})\right)$. 
Therefore, \Cref{thm:biased-fiber} gives the generic fiber. \end{proof}

The proof of \Cref{thm:biased-fiber} proceeds by induction, recovering the final preactivation by comparing degrees and then removing the final layer. The scalar comparison below is the only step that uses the centering of the activation.

\begin{lemma}
Let $s\left(z\right)=\sum_{k = 0}^r c_k z^k$ have degree $r\geq 3$, with $c_{r - 1} = 0$, and be invariant under $\mu_d \left(\delta(s)\right)$. Suppose $p ,\alt{p}, u ,\alt{u} \in \mathbb{C}\left[x_1 ,\ldots , x_d\right]$ satisfy $\deg p = R\geq 1$, $\deg \alt{p},\deg u ,\deg \alt{u}\leq R$, and \[
s\left(p\right)+ u = s\left(\alt{p}\right)+\alt{u}.
\] Then $\alt{p}=\omega p$ for some $\omega^{\delta(s)} = 1$, and $\alt{u}= u$. \label[lemma]{lem:centred-comparison} \end{lemma}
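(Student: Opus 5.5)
The plan is to compare homogeneous components of top degree on both sides of $s(p)+u=s(\alt{p})+\alt{u}$, in three steps. First we find the leading form of $\alt{p}$. Then we use $c_{r-1}=0$ to show that $\alt{p}$ is an exact scalar multiple of $p$. Finally we pin down the scalar and conclude $\alt{u}=u$ using invariance. Write $P$ for the leading form of $p$, which has degree $R$. Rewrite the hypothesis as
\[
s(p)-s(\alt{p}) = \alt{u}-u, \qquad \deg(\alt{u}-u)\leq R .
\]

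\emph{Step 1 (leading form).} The left-hand term $s(p)$ has degree $rR$ with top component $c_rP^r\neq 0$. If $\deg\alt{p}<R$, then $s(\alt{p})$ has degree $<rR$. Since $rR>R$, the top component $c_rP^r$ could not cancel, a contradiction. So $\deg\alt{p}=R$. Comparing degree-$rR$ components gives $c_rP^r=c_r\alt{P}^r$, where $\alt{P}$ is the leading form of $\alt{p}$. Because $\mathbb{C}[x_1,\ldots,x_d]$ is a UFD, $\alt{P}^r=P^r$ forces $\alt{P}=\zeta P$ for some $\zeta$ with $\zeta^r=1$.

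\emph{Step 2 (exact proportionality; the main step).} Write $\alt{p}=\zeta p+e$ with $\deg e=E<R$, and suppose $e\neq 0$. Expand
\[
s(\alt{p})-s(p)=c_r\bigl((\zeta p+e)^r-p^r\bigr)+\sum_{k\leq r-2}c_k\bigl(\alt{p}^k-p^k\bigr),
\]
where the $k=r-1$ term is absent because $c_{r-1}=0$.

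The first bracket equals $r\zeta^{r-1}p^{r-1}e$ plus terms containing $e^j$ with $j\geq 2$. The term $r\zeta^{r-1}p^{r-1}e$ has degree exactly $(r-1)R+E$, with nonzero top component $r c_r\zeta^{r-1}P^{r-1}e_{\mathrm{top}}$. The $e^j$ terms have degree at most $(r-j)R+jE$, which is strictly smaller.

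The sum over $k\leq r-2$ has degree at most $(r-2)R<(r-1)R$. The right-hand side $\alt{u}-u$ has degree at most $R$. Since $r\geq 3$ and $R\geq1$, we have $R<(r-1)R\leq (r-1)R+E$.

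Hence the degree-$((r-1)R+E)$ component cannot cancel, a contradiction. So $e=0$ and $\alt{p}=\zeta p$. This is where both $c_{r-1}=0$ and $r\geq 3$ are used, and it is the delicate point. Without centering, the term $c_{r-1}(\alt{p}^{r-1}-p^{r-1})$ would compete at degree $(r-1)R$, and a translation of $p$ could be absorbed.

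\emph{Step 3 (the root of unity and $u$).} Now
\[
s(\zeta p)-s(p)=\sum_{k\geq 1}c_k(\zeta^k-1)p^k
\]
has degree at most $R$. The powers $p^k$ have pairwise distinct degrees $kR$. Take the largest $k\geq2$ with $c_k(\zeta^k-1)\neq 0$, if one exists. Its top component would survive at degree $kR>R$, which is impossible.

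Hence $\zeta^k=1$ for every $k\geq 2$ with $c_k\neq0$, that is, $\zeta^{\delta(s)}=1$. Set $\omega:=\zeta$. By invariance of $s$ under $\mu_d(\delta(s))$ (take $D=\omega I$), we get $s(\omega z)=s(z)$. Therefore $s(\alt{p})=s(p)$, and so $\alt{u}=u$.
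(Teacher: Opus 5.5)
Your proof is correct and follows the paper's argument. You compare degree-$rR$ components to get $\tilde{h}=\omega h$, then use $c_{r-1}=0$ and $r\geq 3$ to show the term of degree $(r-1)R+\deg(\tilde{p}-\omega p)$ cannot cancel, then read off $\omega^{\delta(s)}=1$ from the largest surviving power and finish with invariance. The only differences are cosmetic: you expand $(\zeta p+e)^r$ binomially where the paper factors $(\omega p)^r-\tilde{p}^{\,r}$, and you explicitly rule out $\deg\tilde{p}<R$, which the paper leaves implicit.
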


\begin{proof}
Let $h$ and $\alt{h}$ denote the leading components of $p$ and $\alt{p}$, respectively. Comparing terms of degree $r R$ gives $\alt{h}^r = h^r$, which yields $\alt{h}=\omega h$ with $\omega^r = 1$.

Suppose $q\mathrel{:=}\omega p -\alt{p}$ is nonzero, and set $m\mathrel{:=}\deg q < R$. The factorization \[
p^r -\alt{p}^r = \left(\omega p\right)^r -\alt{p}^r = q \sum_{j = 0}^{r - 1}\left(\omega p\right)^{r - 1 - j}\alt{p}^j
\] has degree $m +\left(r - 1\right)R$, since the sum has leading component $r\left(\omega h\right)^{r - 1} \neq 0$. Since $c_{r - 1} = 0$, all remaining terms of $s\left(p\right)- s\left(\alt{p}\right)$ have degree at most $\left(r - 2\right)R$. Consequently the difference has degree $m +\left(r - 1\right)R > R$, contradicting its equality with $\alt{u}- u$. Thus $\alt{p}=\omega p$.

If $c_k (1 -\omega^k) \neq 0$ for some $k\geq 2$, the largest such $k$ makes $s\left(p\right)- s\left(\omega p\right)$ have degree $k R > R$, another contradiction. Hence $\omega^{\delta(s)} = 1$. Invariance gives $s\left(\omega p\right)= s\left(p\right)$, and the assumed equality now gives $\alt{u}= u$. \end{proof}

\begin{proof}[Proof of \Cref{thm:biased-fiber}]
 Replace each bias by $\beta_l\mathrel{:=} b_l +\tau \mathbf{1}$ and the activation by $\sigma_{\tau}$. We work over $\mathbb{C}$ and obtain the real fiber by intersecting with the real parameter space. This replacement leaves every layer output unchanged, since $\sigma_{\tau} \left(W_l f_{l - 1} +\beta_l\right)=\sigma(W_l f_{l - 1} + b_l)$. We recover these shifted parameters by induction on depth.

Suppose an arbitrary shifted parameter $\left(\alt{W}_l ,\alt{\beta}_l\right)_{l = 1}^L$ has the same final output. Apply \Cref{lem:centred-comparison} coordinatewise with $R\mathrel{:=} r^{L - 1}$ to the final preactivations and preceding outputs. The recursion bounds the degrees of both preactivations and preceding outputs $f_{L - 1}$ and $\alt{f}_{L - 1}$ by $R$. We obtain \[
\alt{f}_{L - 1} = f_{L - 1} , \quad \alt{W}_L f_{L - 1} +\alt{\beta}_L = D_L \left(W_L f_{L - 1} +\beta_L\right)
\] for some $D_L \in \mu_d \left(\delta(\sigma_{\tau})\right)$. Strong layer independence gives $\alt{W}_L = D_L W_L$ and $\alt{\beta}_L = D_L \beta_L$. For $L = 1$ this immediately recovers the shifted parameters. For $L > 1$ apply the induction hypothesis to the truncated parameter $\left(\alt{W}_l , \alt{\beta}_l\right)_{l = 1}^{L - 1}$, which remains nondegenerate and strongly layer-independent. Subtracting $\tau \mathbf{1}$ from each recovered bias gives the displayed fiber.

Conversely, the invariance of $\sigma_{\tau}$ ensures that each transformation in the theorem statement preserves every layer output. Thus every resulting parameter lies in the fiber.

Every weight row is nonzero by \Cref{lem:leading-components}; hence distinct choices of the $L d$ roots give distinct parameters, which yields the count of the complex fiber. A transformed parameter is real exactly when each root is real, because it rescales a nonzero real weight row. The only real roots of unity are $\pm 1$, and $-1$ has order dividing $\delta(\sigma_{\tau})$ if and only if $\delta(\sigma_{\tau})$ is even. \end{proof}

\subsection{Proof of the unbiased theorem}
\label{sec:proof-unbiased}

Throughout this subsection, we use the unbiased parametrization and the hypotheses of \Cref{thm:identifiability-unbiased}: $r\geq 2$ and $\sigma(0)=\sigma'(0)=0$. In other words, the constant and linear terms of the activation vanish, so we may write \begin{equation}
\sigma(z)\mathrel{:=}\sum_{k = 2}^r a_k z^k , \quad r\geq 2 , \quad a_r \neq 0 .
\label{eq:activation}
\end{equation}
Write $\delta\mathrel{:=}\delta(\sigma)$. Since every exponent occurring in $\sigma$ is divisible by $\delta$, the activation is invariant under $\mu_d\left(\delta\right)$. As before, we work over $\mathbb{C}$ and recover the real fiber by intersecting with the real parameter space.

The vanishing constant and linear terms ensure layer independence at every parameter. We first prove this, then use a degree comparison to recover the weights one layer at a time.

\begin{lemma}
For unbiased networks with activation \eqref{eq:activation}, the coordinates of $f_l$ are algebraically independent for every parameter and every $0\leq l\leq L$. In particular, these networks are strongly layer-independent at every parameter. \label[lemma]{lemma:lin-indep-fl} \end{lemma}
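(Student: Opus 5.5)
We argue by induction on $l$, using the Jacobian criterion: polynomials $F_1,\ldots,F_d\in\mathbb{C}[x_1,\ldots,x_d]$ are algebraically independent over $\mathbb{C}$ if and only if $\det\bigl(\partial F_i/\partial x_j\bigr)$ is not the zero polynomial (characteristic zero). So it suffices to show that $\det J_{f_l}(x)\neq 0$ as a polynomial for every parameter. For $l=0$, $f_0=x$ and $J_{f_0}=I$. For the inductive step, the chain rule applied to \eqref{eq:residual} with $b_l=0$ gives
\[
J_{f_l}(x)=\bigl(\operatorname{diag}\bigl(\sigma'(W_l f_{l-1}(x))\bigr)W_l+I\bigr)J_{f_{l-1}}(x).
\]
By induction $\det J_{f_{l-1}}\neq 0$, so we only need $\det\bigl(\operatorname{diag}(\sigma'(W_l f_{l-1}))W_l+I\bigr)\neq 0$ as a polynomial in $x$.

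This is where $\sigma(0)=\sigma'(0)=0$ enters. Since $\sigma'(z)=\sum_{k\ge 2}k a_k z^{k-1}$ has no constant term, and $f_{l-1}(0)=0$ for unbiased networks (inductively, since $\sigma(0)=0$), every entry of $\operatorname{diag}(\sigma'(W_lf_{l-1}(x)))$ vanishes at $x=0$. Hence at $x=0$ the matrix equals $I$, and its determinant evaluates to $1$. Therefore the determinant is a nonzero polynomial, which completes the induction. Notably this holds for every parameter $\pa$, with no genericity needed. This evaluation at the origin is the only real point, and it is the step I would expect to need care: one must check that $f_{l-1}(0)=0$ indeed propagates, and that evaluating a polynomial determinant at a point legitimately certifies that it is nonzero. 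Both facts are immediate.

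Finally, algebraic independence implies that no nonzero polynomial relation, in particular no affine relation $\lambda_0+\sum_i\lambda_i (f_l)_i=0$, holds among the coordinates of $f_l$. This is strong layer independence for every $0\le l<L$ (indeed for $l\le L$). Working over $\mathbb{C}$ gives the real statement a fortiori.
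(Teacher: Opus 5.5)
Your proof is correct. Both arguments rest on the same fact: for unbiased networks with $\sigma(0)=\sigma'(0)=0$, each $f_l$ fixes the origin and has identity linear part there. You write this as $J_{f_l}(0)=I$; the paper writes it as $f_l=x+s_l$ with all monomials of $s_l$ of degree at least two.

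The difference is how each gets from there to algebraic independence. You invoke the Jacobian criterion. The paper argues directly: if $P(f_l)=0$ with $P\neq 0$, then the lowest-degree homogeneous component $Q$ of $P$ reappears unchanged as the lowest-degree component $Q(x)$ of $P(x+s_l(x))$, a contradiction.

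The paper's route is more elementary. It needs no chain-rule induction and no appeal to an external theorem in characteristic zero. In effect it proves, in this special situation, the easy direction of the Jacobian criterion that you cite.

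Your route is more structural. It makes clear that the lemma amounts to $\det J_{f_l}$ not vanishing identically. The same reasoning therefore certifies algebraic independence for any activation and parameter where that determinant is nonzero at some point. The paper's argument reaches the same generality after an affine change of coordinates centred at such a point.

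A small simplification: you do not need to carry $\det J_{f_{l-1}}\neq 0$ through the induction. The chain rule together with $f_{l-1}(0)=0$ and $\sigma'(0)=0$ gives $J_{f_l}(0)=J_{f_{l-1}}(0)=\cdots=I$ directly.
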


\begin{proof}
The recursion defining $f_l$ preserves the linear part and the value at the origin: we may write $f_l \left(x\right)= s_l \left(x\right) + x$, where all monomials of $s_l$ have degree at least two. Suppose for contradiction that a nonzero polynomial $P$ satisfies $P\left(f_l\right)= 0$, and let $Q$ be its lowest-degree homogeneous component. Then the lowest-degree homogeneous component of $P\left(s_l \left(x\right) + x\right)$ is $Q \left(x\right) \neq 0$, a contradiction. \end{proof}

We apply the following scalar degree comparison to every coordinate of the final preactivation.

\begin{lemma}
Let $p ,\alt{p}, u ,\alt{u} \in \mathbb{C}\left[x_1 ,\ldots , x_d\right]$, where $\deg p = R\geq 1$, $\deg \alt{p}$, $\deg u$, $\deg \alt{u}\leq R$, and $p\left(0\right)=\alt{p}\left(0\right)= 0$. Under the activation hypotheses of this subsection, if \begin{equation}
\sigma(p)+ u =\sigma(\alt{p})+\alt{u},
\label{eq:comparison}
\end{equation} then $\alt{p}=\omega p$ for some $\omega^{\delta} = 1$, and $\alt{u}= u$. \label[lemma]{lem:comparison} \end{lemma}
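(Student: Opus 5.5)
The plan is to follow the proof of \Cref{lem:centred-comparison}. The hypothesis $c_{r-1}=0$ is no longer available, and $r=2$ is allowed, so I would replace the crude degree count by a first-order Taylor expansion. The vanishing of $p$ and $\alt{p}$ at the origin will rule out the one degenerate case.

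\emph{Step 1: leading components.} Let $h$ and $\alt{h}$ be the leading components of $p$ and $\alt{p}$, so $\deg h=R$. The right-hand side of \eqref{eq:comparison} minus the left-hand side is $\sigma(\alt{p})-\sigma(p)=u-\alt{u}$, which has degree at most $R$. Since $rR>R$, the degree-$rR$ part must vanish, which gives $a_r\alt{h}^r=a_rh^r$. Hence $\alt{h}=\omega h$ with $\omega^r=1$; in particular $\deg\alt{p}=R$.

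\emph{Step 2: $\alt{p}=\omega p$.} Set $q:=\omega p-\alt{p}$ and suppose $q\neq 0$, with $m:=\deg q<R$. Because $p(0)=\alt{p}(0)=0$, we have $q(0)=0$, so $q$ is not a nonzero constant and $m\geq 1$. This is exactly where the hypothesis at the origin enters. Taylor expansion of the polynomial $\sigma$ gives
\[
\sigma(\alt{p})=\sigma(\omega p-q)=\sigma(\omega p)-q\,\sigma'(\omega p)+\sum_{j\geq 2}\tfrac{(-q)^j}{j!}\sigma^{(j)}(\omega p).
\]
I would then bound the degree of each piece.
\begin{itemize}
\item The term $q\,\sigma'(\omega p)$ has exact degree $m+(r-1)R$, with leading component $ra_r(\omega h)^{r-1}$ times that of $q$.
\item The terms with $j\geq2$ have degree at most $jm+(r-j)R<m+(r-1)R$.
\item The difference $\sigma(\omega p)-\sigma(p)=\sum_{k=2}^{r}a_k(\omega^k-1)p^k$ has no $k=r$ term because $\omega^r=1$, so its degree is at most $(r-1)R<m+(r-1)R$.
\end{itemize}
Therefore $\sigma(\alt{p})-\sigma(p)$ has degree exactly $m+(r-1)R\geq 1+R>R$. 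This contradicts the equality with $u-\alt{u}$, so $q=0$.

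\emph{Step 3: the root and the remainders.} Now $\sigma(p)-\sigma(\omega p)=\sum_{k\geq2}a_k(1-\omega^k)p^k$. If some coefficient $a_k(1-\omega^k)$ is nonzero, take the largest such $k$; the difference then has degree $kR\geq 2R>R$, again a contradiction. So $\omega^k=1$ for every $k$ with $a_k\neq0$, which means $\omega^{\delta}=1$. By invariance of $\sigma$ under $\mu_d(\delta)$ we get $\sigma(\omega p)=\sigma(p)$, and then \eqref{eq:comparison} yields $\alt{u}=u$.

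The main obstacle is Step 2. The naive bound from \Cref{lem:centred-comparison} fails once $a_{r-1}\neq0$, since the $a_{r-1}$ term alone can reach degree $(r-1)R$. The Taylor expansion about $\omega p$ isolates the single dominant term $q\,\sigma'(\omega p)$, and $m\geq1$ guarantees that this term strictly exceeds degree $R$ even for $r=2$.
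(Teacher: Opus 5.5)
Your proof is correct and follows the paper's three-step structure, with the same key observation that $q(0)=0$ forces $m\geq 1$. The only difference is in Step 2. The paper keeps the factorization of $p^r-\tilde{p}^r$ from \Cref{lem:centred-comparison} and bounds the remaining terms $a_k(p^k-\tilde{p}^k)$, $k\leq r-1$, simply by degree $(r-1)R$, which already lies strictly below $m+(r-1)R$ once $m\geq 1$. So your Taylor expansion about $\omega p$ is valid but not needed; the naive approach fails only if one keeps the $(r-2)R$ estimate of the centred lemma.
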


\begin{proof}
The proof of \Cref{lem:centred-comparison} applies with only the following change in the degree estimate. If $q\mathrel{:=}\omega p -\alt{p}$ is nonzero, then $q\left(0\right)= 0$, so $m\mathrel{:=}\deg q\geq 1$. The same factorization gives \[
\deg(p^r -\alt{p}^r)= m +\left(r - 1\right)R >\left(r - 1\right)R\geq R .
\] All remaining terms of $\sigma(p)-\sigma(\alt{p})$ have degree at most $\left(r - 1\right)R$. This again contradicts \eqref{eq:comparison}, so $\alt{p}=\omega p$. The argument giving $\omega^{\delta} = 1$ and $\alt{u}= u$ is unchanged. \end{proof}

\begin{proof}[Proof of \Cref{thm:identifiability-unbiased}]
We prove the stated fiber description for every nondegenerate unbiased parameter $\pa$. By \Cref{lemma:lin-indep-fl}, $\pa$ is layer-independent. Suppose an arbitrary unbiased parameter $\alt{\pa}$ has the same final output. Since $\sigma(0)=0$, all layer outputs and preactivations of both networks vanish at the origin. By \Cref{lem:leading-components}, each coordinate of $W_L f_{L-1}$ has degree $R\mathrel{:=}r^{L-1}$; the recursion bounds the degrees of both final preactivations and preceding outputs by $R$. Applying \Cref{lem:comparison} coordinatewise gives \[
\alt{f}_{L - 1} = f_{L - 1} , \quad \left(\alt{W}_L - D_L W_L\right)f_{L - 1} = 0
\] for some $D_L \in \mu_d \left(\delta\right)$. Layer independence gives $\alt{W}_L = D_L W_L$. Repeating this argument for the preceding layers recovers $\alt{W}_l=D_l W_l$ for every $l$, ending with $f_0=x$.

Conversely, invariance of $\sigma$ under $\mu_d\left(\delta\right)$ shows that these row rescalings preserve every layer output. Every weight row of $\pa$ is nonzero by \Cref{lem:leading-components}, so a rescaled parameter is real exactly when each rescaling factor is real. This gives the stated real fiber at every nondegenerate parameter. Nondegeneracy holds generically, proving the theorem. \end{proof}

\begin{remark}
    For activations as in \eqref{eq:activation} -- in particular, for nonlinear monomials -- the fiber description in \Cref{thm:identifiability-unbiased} holds for every nondegenerate parameter. Degenerate parameters are similar to subnetworks, in that they yield networks which may be pruned. The above result can thus be understood as the analog of \cite[Thm. A.2]{kohn2026singularlearningoccamsrazor} about monomial plain networks, which asserts that the generic fiber is attained away from subnetworks.
    
    For monomial plain networks it is readily checked that subnetworks have larger fibers. This, however, is not true for degenerate parameters of monomial ResNets, as the following subsection shows.
\end{remark}

\subsection{Fibers over degenerate parameters} \label{sec:degfib}

The preceding identifiability results apply to nondegenerate parameters; we now study fibers over degenerate parameters, focusing on unbiased networks with $L = d = 2$ and $\sigma(z)\mathrel{:=} z^2$ over $\mathbb{R}$.

The \textit{row-sign orbit} of $\pa = \left(A , B\right)$ consists of independent sign changes of its rows. By \Cref{thm:identifiability-unbiased} and \Cref{lemma:lin-indep-fl}, the real fiber of a nondegenerate parameter is exactly this orbit. At degenerate parameters, fibers can contain further row-sign orbits or even continuous families.

Identifiability at degenerate parameters requires a more detailed analysis. Degeneracy does not always enlarge the fiber: the real fiber of $(0 , 0)$ is a singleton, since an output equal to $x$ forces the sum of the componentwise squares $(B(A x)^2 + B x)^2 +(A x)^2$ to vanish, hence $A = B = 0$. If exactly one matrix is zero, exchanging the two matrices preserves the output and gives a distinct row-sign orbit. The following three examples illustrate further ways in which fibers can grow. All displayed scalar parameters are nonzero and real.

\begin{example}
Take \[
A\mathrel{:=}\begin{pmatrix}
 0 & 0 \\
 \alpha & 0
\end{pmatrix} , \quad B\mathrel{:=}\begin{pmatrix}
 \gamma & 0 \\
 \beta & 0
\end{pmatrix} .
\] The first coordinate is unchanged by the first layer, so \[
f_2 \left(x\right)=\begin{pmatrix}
 \gamma^2 x_1^2 + x_1 \\
 \left(\alpha^2 +\beta^2\right)x_1^2 + x_2
\end{pmatrix} .
\] With $\gamma$ fixed, the output determines only $\alpha^2 +\beta^2$. Thus every pair satisfying \begin{equation}
\alt{\alpha}^2 +\alt{\beta}^2 =\alpha^2 +\beta^2
\label{eq:fermat-fiber}
\end{equation} gives the same network. This circle contains parameters outside the finite row-sign orbit. \label[example]{ex:fermat-fiber} \end{example}

\begin{example}
Take \[
A\mathrel{:=}\begin{pmatrix}
 0 & 0 \\
 \alpha & \beta
\end{pmatrix} , \quad B\mathrel{:=}\begin{pmatrix}
 \gamma & 0 \\
 0 & 0
\end{pmatrix} , \quad C\mathrel{:=}\begin{pmatrix}
 \gamma & 0 \\
 \alpha & \beta
\end{pmatrix} .
\] The first layer changes only the second coordinate, while the second layer uses only the first coordinate. Hence $\left(A , B\right)$ has output \[
f_2 \left(x\right)=\begin{pmatrix}
 \gamma^2 x_1^2 + x_1 \\
 \left(\alpha x_1 +\beta x_2\right)^2 + x_2
\end{pmatrix} .
\] This is also the output of $\left(0 , C\right)$ and $\left(C , 0\right)$: the two nonzero rows can be consolidated in either layer. These three parameters lie in distinct row-sign orbits. \label[example]{ex:layer-consolidation} \end{example}

\begin{example}
Take \[
A\mathrel{:=}\begin{pmatrix}
 \alpha & \beta \\
 0 & 0
\end{pmatrix} , \quad B\mathrel{:=}\begin{pmatrix}
 \delta & 0 \\
 0 & \gamma
\end{pmatrix} , \
 \alt{A}\mathrel{:=}\begin{pmatrix}
 \alpha & \beta \\
 0 & \gamma
\end{pmatrix} , \quad \alt{B}\mathrel{:=}\begin{pmatrix}
 \delta & 0 \\
 0 & 0
\end{pmatrix} .
\] Writing $h\left(x\right)\mathrel{:=}\left(\alpha x_1 +\beta x_2\right)^2 + x_1$, both $\left(A , B\right)$ and $\left(\alt{A},\alt{B}\right)$ have output \[
f_2 \left(x\right)=\begin{pmatrix}
 \delta^2 h\left(x\right)^2 + h\left(x\right) \\
 \gamma^2 x_2^2 + x_2
\end{pmatrix} .
\] Thus the second row of $B$ can move to $A$ without changing the network. The two parameters lie in distinct row-sign orbits, since their zero rows occur in different layers. \label[example]{ex:row-transfer} \end{example}

A classification of fibers on the degenerate locus remains open.

\section{Criticality}
\label{sec:criticality}

The degree comparisons in the arguments for identifiability also give sufficient conditions for regularity of the ResNet parametrization. Here, no invariance hypothesis (as in \Cref{thm:biased-fiber}) is imposed on the activation.

\begin{proposition}
Let $\sigma$ have degree $r\geq 3$. At every nondegenerate, strongly layer-independent biased parameter $\pa$, the differential $J_{\psi} \left(\pa\right)$ is injective. In other words, every critical parameter is degenerate or fails strong layer independence. \label[proposition]{prop:regularity-bias} \end{proposition}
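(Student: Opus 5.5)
The plan is to linearize the degree comparison of \Cref{lem:centred-comparison} and run the same induction on depth as in the proof of \Cref{thm:biased-fiber}. Fix a nondegenerate, strongly layer-independent biased parameter $\pa$ and a tangent vector $(\dot W_l,\dot b_l)_{l=1}^L$. Let $\dot f_l$ denote the directional derivative of $f_l$ along it. Differentiating the recursion \eqref{eq:residual} gives $\dot f_0=0$ and
\[
\dot f_l=\sigma'(p_l)*v_l+\dot f_{l-1},\qquad p_l:=W_l f_{l-1}+b_l,\qquad v_l:=\dot W_l f_{l-1}+W_l\dot f_{l-1}+\dot b_l .
\]
Suppose $\dot f_L=0$. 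The goal is to show that every $\dot W_l$ and every $\dot b_l$ vanishes.

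The first step is a degree bound: $\deg \dot f_l\le r^l$ for all $l$. This follows by induction, since $\sigma'(p_l)$ has degree at most $(r-1)r^{l-1}$ and $v_l$ has degree at most $r^{l-1}$ by the inductive hypothesis together with $\deg f_{l-1}\le r^{l-1}$. In particular $u:=\dot f_{L-1}$ and $v:=v_L$ have degree at most $R:=r^{L-1}$.

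The key step is the linearized comparison, done coordinatewise. By \Cref{lem:leading-components}, the $i$-th coordinate of $p_L$ has degree exactly $R$, with leading component $h=\kappa_{L-1}(q_L)_i\neq 0$. Hence $\sigma'((p_L)_i)$ has degree exactly $(r-1)R$, with leading component $r a_r h^{r-1}\neq 0$. The equation $\dot f_L=0$ reads $\sigma'((p_L)_i)\,v_i=-u_i$. If $v_i\neq 0$, the left side has degree at least $(r-1)R\ge 2R>R\ge \deg u_i$, which is a contradiction. This is where $r\ge 3$ is used; for $R\geq 1$ it gives a strict inequality, and no centering or invariance is needed. Therefore $v=0$, and consequently $u=\dot f_{L-1}=0$.

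It then remains to peel off the last layer. Since $\dot f_{L-1}=0$, we have $v=\dot W_L f_{L-1}+\dot b_L=0$. By strong layer independence at level $L-1$, the coordinates of $f_{L-1}$ together with $1$ are linearly independent, so $\dot W_L=0$ and $\dot b_L=0$. The condition $\dot f_{L-1}=0$ says precisely that the truncated tangent vector $(\dot W_l,\dot b_l)_{l<L}$ lies in the kernel of the differential of the depth-$(L-1)$ parametrization. The truncated parameter is still nondegenerate and strongly layer-independent, so the induction hypothesis gives vanishing of the remaining components. The base case is $L=1$, where $f_0=x$ and $\dot f_0=0$. I expect the only delicate point to be the degree bookkeeping in the key step, namely that the leading component of $\sigma'(p_L)$ does not cancel. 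Nondegeneracy provides this, because $h^{r-1}\neq0$ over a domain.
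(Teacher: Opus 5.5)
Your proof is correct and follows the paper's own argument essentially verbatim. The paper likewise compares degrees in the chain rule $\dot f_L=\sigma'(p)*\dot p+\dot f_{L-1}$ to get $\dot p=0$ (its lemma on the preactivation derivative), then uses strong layer independence to kill $\dot W_L,\dot b_L$ and repeats on the preceding layers. Your inductive bound $\deg\dot f_l\le r^l$ makes explicit a degree estimate the paper uses without comment, and the ``in other words'' clause follows because an injective differential has the largest possible rank $\dim\pasp$.
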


\begin{proposition}
Let $\sigma$ have degree $r\geq 2$ and satisfy $\sigma(0)= 0$. At every nondegenerate, layer-independent unbiased parameter $\pa$, the differential $J_{\psi} \left(\pa\right)$ is injective. For activations of the form \eqref{eq:activation}, every critical unbiased parameter is degenerate. \label[proposition]{prop:regularity-unbiased} \end{proposition}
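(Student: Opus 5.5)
We prove injectivity of $J_\psi(\pa)$ by linearizing the degree comparison of \Cref{lem:comparison} and peeling off layers from the top. Fix a nondegenerate, layer-independent unbiased parameter $\pa=(W_l)_{l=1}^L$ and a tangent vector $(\dot W_l)_{l=1}^L$ with $J_\psi(\pa)(\dot W_l)_l=0$. Write $p_l:=W_lf_{l-1}$ for the preactivation and $\dot f_l$, $\dot p_l$ for the first-order variations. Differentiating \eqref{eq:residual} gives
\[
\dot f_l=\sigma'(p_l)*\dot p_l+\dot f_{l-1},\qquad \dot p_l=\dot W_l f_{l-1}+W_l\dot f_{l-1},
\]
with $\dot f_0=0$. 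The hypothesis is $\dot f_L=0$. The goal is to show $\dot p_L=0$ and $\dot f_{L-1}=0$ by a degree argument.

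Set $R:=r^{L-1}$. By \Cref{lem:leading-components}, each coordinate of $p_L$ has degree exactly $R$, with leading form $\kappa_{L-1}(q_L)_i\neq0$. A routine induction shows that all $\dot f_l$ and $\dot p_l$ with $l\le L$ (in particular $\dot f_{L-1}$ and $\dot p_L$) have degree at most $R$. Fix a coordinate $i$ and write $P:=(p_L)_i$ and $\dot P:=(\dot p_L)_i$. Then
\[
\sigma'(P)\,\dot P=-(\dot f_{L-1})_i .
\]
Suppose $\dot P\neq0$ with $\deg\dot P=m$. Since $\sigma'(P)$ has degree $(r-1)R$ with nonzero leading form $ra_r\kappa_{L-1}^{r-1}(q_L)_i^{r-1}$, the left side has degree $(r-1)R+m\ge(r-1)R\ge R$. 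The right side has degree at most $R$, so this is impossible when $r\ge3$.

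This step is the main obstacle. For $r=2$ the bound $(r-1)R+m\ge R$ does not exclude equality, so we must use $\sigma(0)=0$, as in \Cref{lem:comparison}. All $f_l$ vanish at the origin, and hence so do $p_L$ and $\dot p_L$. If $\dot P\ne0$, then $m\ge1$ and $(r-1)R+m>R$, which again contradicts the degree bound. Therefore $\dot p_L=0$, and then $\dot f_{L-1}=-\sigma'(p_L)*\dot p_L=0$.

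From $\dot p_L=0$ and $\dot f_{L-1}=0$ we get $\dot W_Lf_{L-1}=0$. Layer independence, that is, linear independence of the coordinates of $f_{L-1}$, then forces $\dot W_L=0$. The truncated parameter $(W_l)_{l<L}$ is still nondegenerate and layer-independent, and $\dot f_{L-1}=0$ is the same hypothesis one level down. Induction on depth therefore gives $\dot W_l=0$ for all $l$. The base case is the same computation with $f_0=x$, whose coordinates are independent. Hence $J_\psi(\pa)$ is injective.

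The injective differential has the maximal possible rank, since injectivity is the largest rank any linear map on $\pasp$ can attain. So $\pa$ is regular. For activations of the form \eqref{eq:activation}, \Cref{lemma:lin-indep-fl} shows that every unbiased parameter is layer-independent. Thus the only remaining hypothesis is nondegeneracy, and every critical unbiased parameter is degenerate.
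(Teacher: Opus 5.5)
Your proposal is correct and is essentially the paper's proof: the paper's lemma on the preactivation derivative makes the same degree comparison in $0=\sigma'(p)*\dot p+\dot f_{L-1}$, using $\sigma(0)=0$ to force $\deg \dot p_i\ge 1$, and then peels off layers via layer independence down to $f_0=x$. Two cosmetic remarks: the separate $r\ge 3$ case is redundant, since the $\sigma(0)=0$ argument already covers all $r\ge 2$; and the bound $\deg\le R$ holds for $\dot f_l$ only with $l\le L-1$ (not $l\le L$), which is all you actually use.
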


Choose a parameter tangent $(\dot{W}_l ,\dot{b}_l)_{l = 1}^L$ and vary each weight and bias along the corresponding affine line. A dot on a layer output denotes its derivative at the original parameter, with $x$ fixed. For unbiased networks all biases and their derivatives are zero. Injectivity means that $\dot{f}_L = 0$ forces the parameter tangent to vanish.

\begin{lemma}
At a nondegenerate parameter, $\dot{f}_L = 0$ implies $\dot{p} = 0$ for $p\mathrel{:=} W_L f_{L - 1} + b_L$ in either of the following settings: biased networks with $r\geq 3$, or unbiased networks with $r\geq 2$ and $\sigma(0)= 0$. \label[lemma]{lem:preactivation-derivative} \end{lemma}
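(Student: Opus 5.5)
Differentiating the last layer $f_L=\sigma(p)+f_{L-1}$ along the chosen tangent direction gives
\[
\dot f_L=\sigma'(p)*\dot p+\dot f_{L-1}.
\]
So $\dot f_L=0$ means $\sigma'(p)*\dot p=-\dot f_{L-1}$. I will argue coordinatewise, linearizing the degree comparison of \Cref{lem:centred-comparison} and \Cref{lem:comparison}, and compare degrees on both sides.

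Set $R:=r^{L-1}$. By \Cref{lem:leading-components}, each coordinate $p_i$ has degree exactly $R$, with leading component $\kappa_{L-1}(q_L)_i\neq 0$. Hence $\sigma'(p_i)$ has degree exactly $(r-1)R$, with leading component $r a_r \kappa_{L-1}^{r-1}(q_L)_i^{r-1}$. On the other side, $\dot p=\dot W_L f_{L-1}+W_L\dot f_{L-1}+\dot b_L$ and $\dot f_{L-1}$ are derivatives of polynomials whose degrees are bounded by $R$ uniformly in the parameters. So $\deg \dot p_i\le R$ and $\deg(\dot f_{L-1})_i\le R$.

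Suppose $\dot p_i\neq 0$ and write $m:=\deg\dot p_i$. Since $\mathbb{C}[x]$ is a domain, the product $\sigma'(p_i)\dot p_i$ has degree exactly $(r-1)R+m$. This must equal $\deg(\dot f_{L-1})_i\le R$, so $(r-1)R+m\le R$, that is, $(r-2)R+m\le 0$.

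\textbf{Biased case, $r\ge 3$.} Here $(r-2)R\ge R\ge 1$, which is a contradiction regardless of $m\ge 0$. Hence $\dot p_i=0$.

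\textbf{Unbiased case, $r\ge 2$, $\sigma(0)=0$.} Now $(r-2)R$ may be $0$ when $r=2$, so I need $m\ge 1$, that is, $\dot p_i$ nonconstant. This is the step needing care, and I would handle it through the origin. With zero biases and $\sigma(0)=0$, every layer output and preactivation vanishes at $x=0$ for every parameter, by induction. Differentiating in the parameter, $\dot p(0)=0$. So a nonzero $\dot p_i$ has $m\ge 1$, and again $(r-1)R+m>R$ gives a contradiction.

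In both cases $\dot p=0$. The main obstacle is just checking that the derivative degree bounds are legitimate: each coefficient of a polynomial of degree at most $R$ stays zero along the whole line, so its derivative stays zero too.
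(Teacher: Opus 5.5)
Your proposal is correct and follows the paper's proof. It uses the same three steps: the chain rule $\dot f_L=\sigma'(p)*\dot p+\dot f_{L-1}$, the exact degree $(r-1)R$ of $\sigma'(p_i)$ from \Cref{lem:leading-components}, and a degree comparison against $\deg \dot f_{L-1}\le R$, with $\dot p(0)=0$ supplying $\deg\dot p_i\ge 1$ in the unbiased case (your separate bound $\deg\dot p_i\le R$ is never actually needed).
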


\begin{proof}
We compare the degrees of the two terms in the chain rule \begin{equation}
0 =\dot{f}_L =\sigma'\left(p\right)*\dot{p} +\dot{f}_{L - 1} .
\label{eq:critical-chain}
\end{equation} Set $R\mathrel{:=} r^{L - 1}$. By \Cref{lem:leading-components}, $\deg \sigma'\left(p_i\right)=\left(r - 1\right)R$ for every $i$. In the biased setting, a nonzero $\dot{p}_i$ therefore makes the product have degree at least $\left(r - 1\right)R > R$. In the unbiased setting, $\sigma(0) = 0$ yields $p\left(0\right)= 0$ for each parameter, so a nonzero $\dot{p}_i$ has degree at least one and the product has degree at least $\left(r - 1\right)R + 1 > R$. Neither can cancel $\dot{f}_{L - 1}$, which has degree at most $R$. Thus $\dot{p} = 0$. \end{proof}

\begin{proof}[Proof of 
\Cref{prop:regularity-bias,prop:regularity-unbiased}] We recover the tangent one layer at a time, starting at the output. Suppose $\dot{f}_L = 0$. By \Cref{lem:preactivation-derivative}, $\dot{p} = 0$, and \eqref{eq:critical-chain} gives $\dot{f}_{L - 1} = 0$. The product rule becomes \[
0 =\dot{p} =\dot{W}_L f_{L - 1} +\dot{b}_L .
\] Strong layer independence forces $\dot{W}_L = 0$ and $\dot{b}_L = 0$. For the unbiased case, layer independence suffices to give $\dot{W}_L = 0$. Repeat with the preceding layer. This reaches $f_0 = x$, whose coordinates together with $1$ are linearly independent, and proves that every parameter derivative vanishes.

An injective differential has maximal rank, so the assertions about critical parameters follow. For activations as in \eqref{eq:activation}, layer independence holds everywhere by \Cref{lemma:lin-indep-fl}. \end{proof}

\begin{remark} \label[remark]{rem:subnetCrit}
    For plain unbiased monomial networks of activation degree $r  > 9(d(d-1))^{L-1}$, the network parametrization is regular away from subnetworks \cite{kohn2026singularlearningoccamsrazor}.
    Specializing \Cref{prop:regularity-unbiased} to monomial networks, we obtain a corresponding statement for unbiased monomial ResNets. Here, the result holds for $r \geq 2$.
\end{remark}

In the plain network setting from \Cref{rem:subnetCrit}, all subnetworks are critical points of the network parametrization, since they cause fibers of larger dimension than the generic fiber.
However, for unbiased monomial ResNets, not all degenerate parameters are critical points, as illustrated by shallow networks of width two below. 
Before discussing this situation in \Cref{sec:shallow-criticality}, we provide a simple sufficient criterion for criticality.

\begin{lemma}
For unbiased networks with activation \eqref{eq:activation}, a parameter with a zero weight row is critical. \label[lemma]{lem:zero-row-critical} \end{lemma}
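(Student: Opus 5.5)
The plan is to exhibit a nonzero tangent vector in the kernel of $J_{\psi}(\pa)$, and then compare with the maximal rank, which is full. Throughout, the activation has the form \eqref{eq:activation}, so $\sigma(z)=\sum_{k\geq 2} a_k z^k$ and $\sigma'(0)=0$.

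\textbf{Maximal rank.} By \Cref{prop:regularity-unbiased}, every nondegenerate unbiased parameter has an injective differential. Nondegenerate parameters form a nonempty Zariski-open set. Hence the maximum rank of $J_{\psi}$ on $\pasp=(\mathbb{R}^{d\times d})^L$ equals $\dim \pasp = Ld^2$. It therefore suffices to show that $J_{\psi}(\pa)$ has a nontrivial kernel whenever some row of some $W_l$ vanishes.

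\textbf{Kernel vector.} Suppose row $i$ of $W_l$ is zero. Choose a tangent with $\dot{W}_m=0$ for $m\neq l$, and let $\dot{W}_l$ be zero except for an arbitrary nonzero row $v^\top$ in position $i$.
\begin{itemize}
\item Along the affine line $W_l+t\dot{W}_l$, the $i$-th preactivation of layer $l$ is $t\,v^\top f_{l-1}(x)$, because row $i$ of $W_l$ is zero and earlier layers are unchanged.
\item Since $\sigma$ has no constant or linear term, $\sigma\bigl(t\,v^\top f_{l-1}\bigr)=O(t^2)$. Equivalently, by the chain rule, $\dot{(f_l)}_i=\sigma'(0)\,v^\top f_{l-1}=0$.
\item The other coordinates of layer $l$ do not depend on $t$, and $\dot{f}_{l-1}=0$, so $\dot{f}_l=0$.
\item Layers $l+1,\dots,L$ have unperturbed weights and depend on the parameters only through $f_l$. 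An induction using $\dot{f}_m=\sigma'(p_m)*(W_m\dot{f}_{m-1})+\dot{f}_{m-1}$ then gives $\dot{f}_L=0$.
\end{itemize}

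\textbf{Conclusion.} Thus $J_{\psi}(\pa)$ annihilates a nonzero tangent; in fact it annihilates a $d$-dimensional space of them, one for each choice of $v$. Its rank is therefore strictly below $Ld^2$, the maximum, so $\pa$ is critical.

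The only point needing care is the identification of the maximal rank, which rests on \Cref{prop:regularity-unbiased} and the genericity of nondegeneracy. The kernel computation itself is a direct chain-rule check.
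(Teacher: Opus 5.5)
Your proof is correct and matches the paper's argument essentially step for step. Both perturb only the zero row $i$ of $W_l$, use $\sigma'(0)=0$ to get $\dot{f}_l=0$, propagate this through the unperturbed later layers, and compare with the generic injectivity from \Cref{prop:regularity-unbiased}. Your added remark that the kernel contains a $d$-dimensional family, one for each choice of $v$, is a harmless strengthening.
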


\begin{proof}
Suppose row $i$ of $W_l$ is zero. Choose a nonzero row vector $v^\top$ and vary only that row: $\dot{W}_l = e_i v^\top$, and $\dot{W}_k = 0$ for $k \neq l$. Then $\dot{f}_{l - 1} = 0$, so \[
\dot{f}_l =\sigma'\left(W_l f_{l - 1}\right)*\dot{W}_l f_{l - 1} .
\] Only coordinate $i$ can be nonzero, and it vanishes because its preactivation is zero and $\sigma'\left(0\right)= 0$. All subsequent weight matrices are fixed, so the chain rule gives $\dot{f}_L = 0$. This nonzero parameter tangent lies in $\ker J_{\psi} \left(\pa\right)$. Since the Jacobian is generically injective (\Cref{prop:regularity-unbiased}), $\pa$ is a critical point. \end{proof}

\subsection{Two-layer networks of width two}
\label{sec:shallow-criticality}


Throughout this subsection the networks are unbiased. Let $L = d = 2$ and $\sigma(z)= z^r$ with $r\geq 3$. We classify the critical points of the ResNet parametrization and write $f$ in place of $f_2$. The arguments are carried out over $\mathbb{C}$, which is harmless: the Jacobian of $\psi$ at a real parameter is a real matrix, so its rank is the same over $\mathbb{R}$ and $\mathbb{C}$.

Write the weights as $\left(A , B\right)$, with row vectors $a_i$ and $b_i$. Set \[
w\mathrel{:=} A x , \quad u\mathrel{:=} B w^r , \quad v\mathrel{:=} B x .
\] Thus $u$ is the second preactivation of the monomial plain network, whereas $u + v$ is the second residual preactivation. The output and its derivative are \begin{align}
f\left(x\right)& =\left(u + v\right)^r + w^r + x , \\
 \dot{f} & = r\left(u + v\right)^{r - 1} *\left(\dot{u} +\dot{v}\right) + r w^{r - 1} *\dot{w} , \label{eq:shallow-derivative}
\end{align} where \[
\dot{w} =\dot{A} x , \quad \dot{v} =\dot{B} x , \quad \dot{u} =\dot{B} w^r + r B\left(w^{r - 1} *\dot{w}\right).
\]

\begin{proposition}
For $L = d = 2$ and $\sigma(z)= z^r$, $r\geq 3$, an unbiased parameter $\left(A , B\right)$ is critical if and only if a weight matrix has a zero row or \begin{equation}
A =\rho \begin{pmatrix}
 1 & -\lambda^{- r} \\
 \lambda & -\lambda^{1 - r}
\end{pmatrix} , \quad B =\begin{pmatrix}
 \mu & -\mu \lambda^{- r} \\
 \nu & -\nu \lambda^{- r}
\end{pmatrix}
\label{eq:shallow-critical-form}
\end{equation} for some nonzero $\rho ,\lambda ,\mu ,\nu \in \mathbb{R}$. \label[proposition]{prop:shallow-critical-classification} \end{proposition}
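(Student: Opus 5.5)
We would split according to degeneracy. The sufficient direction for a zero row is \Cref{lem:zero-row-critical}. At a nondegenerate parameter, \Cref{lemma:lin-indep-fl} gives layer independence, so \Cref{prop:regularity-unbiased} makes the parameter regular. Hence it remains to classify the degenerate parameters without zero rows, and to decide which of them are critical.

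\textbf{Step 1: the degenerate locus without zero rows.} The first layer has $g_1=w^r$, which is nonzero when $A$ has no zero row. So degeneracy means that some coordinate $u_i=b_i w^r$ vanishes identically. We will argue that this forces $w_1,w_2$ to be proportional. If $a_1,a_2$ are independent, then $w_1^r,w_2^r$ are linearly independent, so $b_i=0$, which is excluded. Hence $a_2=\lambda a_1$. Then
\[
b_i w^r=(b_{i1}+\lambda^r b_{i2})w_1^r=0
\]
gives $b_{i1}=-\lambda^r b_{i2}$. Writing $a_1=\rho(1,-\lambda^{-r})$ and normalizing $B$ accordingly is not yet justified. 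So the real content is to show that, at a critical point, both rows of $B$ satisfy this relation and $a_1$ has the special direction $(1,-\lambda^{-r})$. The expected picture is that the second-layer kernel of $B$ must be spanned by the vector $(1,\lambda^{r})^{\top}$ up to scaling, and that $a_1$ is also constrained. We will derive these constraints rather than guess them.

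\textbf{Step 2: criticality at degenerate parameters.} For the degenerate case we write $w=w_1(1,\lambda)^{\top}$ with $w_1=a_1x$, and set $k:=(1,\lambda^r)^{\top}$, so that $w^r=w_1^r k$. We then examine the kernel of $\dot f$ in \eqref{eq:shallow-derivative} by degree comparison in the two coordinates. The top-degree parts are $r(u+v)^{r-1}*\dot u$, of degree up to $(r-1)\max(\deg)+r$. Since $u_i=0$ for the degenerate row, and $u_j$ may be nonzero for the other row, we split into two subcases.
\begin{itemize}
\item If only one row of $B$ annihilates $k$, the nondegenerate coordinate $j$ forces $\dot u_j=0$ and $\dot v_j=0$ by the argument of \Cref{lem:preactivation-derivative}.
\item Coordinate $i$ then reads $r v_i^{r-1}(\dot u_i+\dot v_i)+r w_i^{r-1}\dot w_i=0$, where $\dot u_i=\dot b_i w_1^r k+r b_i(w^{r-1}*\dot w)$.
\end{itemize}
Comparing the coefficient of $w_1^r$ terms (degree $r$ versus degree $1$ pieces), we expect to show that a nonzero tangent exists only if $v_i$ and $w_1$ are proportional linear forms. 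This would give $b_i\parallel a_1$, that is, $a_1\parallel(1,-\lambda^{-r})$, and it would force the other row $b_j$ also to annihilate $k$. That is precisely the shape \eqref{eq:shallow-critical-form}.

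\textbf{Step 3: converse and main obstacle.} For the converse at \eqref{eq:shallow-critical-form}, we will exhibit an explicit kernel vector. All of $v_1,v_2,u$ are then multiples of the single linear form $\ell=x_1-\lambda^{-r}x_2\propto w_1$, with $u=0$. So $f$ depends on $x$ only through the linear forms $\ell$ and the identity, and a tangent such as $\dot A=\epsilon\, k' a_1$ with a compensating $\dot B$ should cancel. We will solve the small linear system for this tangent directly. By the generic injectivity of \Cref{prop:regularity-unbiased}, a nonzero kernel vector implies criticality.

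The main obstacle is the "only if" computation in Step 2. There, lower-degree terms of different origins can cancel, so the comparison must be organized carefully. We would first expand in powers of $\ell$ after a linear change of coordinates adapted to $a_1$, and only then compare coefficients.
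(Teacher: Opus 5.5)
Your reduction matches the paper's: zero rows are critical by \Cref{lem:zero-row-critical}, and nondegenerate parameters are regular. A degenerate parameter without zero rows has $a_2=\lambda a_1$ with $\lambda\neq0$, and every dead row $b_i$ is parallel to $q:=(1,-\lambda^{-r})$. The converse is an explicit kernel vector: $f=\ell^r(\rho^r+\mu^r,(\rho\lambda)^r+\nu^r)^\top+x$, so $\dot\rho=\rho$, $\dot\mu=-\rho^r\mu^{1-r}$, $\dot\nu=-(\rho\lambda)^r\nu^{1-r}$ with $\lambda$ fixed works. Within your ansatz $\dot A=\epsilon k'a_1$ this requires $k'\parallel(1,\lambda)^\top$.

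The gap is the ``only if'' direction on the degenerate locus. This is the whole content of the proposition, and you only announce it. Two arguments are missing.

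\textbf{The dead-coordinate test.} If $u_i=0$, the $i$th equation is $v_i^{r-1}\dot u_i+v_i^{r-1}\dot v_i+w_i^{r-1}\dot w_i=0$. The first term is homogeneous of degree $2r-1$ and the others of degree $r$, so $\dot u_i=0$ and $v_i^{r-1}\dot v_i=-w_i^{r-1}\dot w_i$. If $v_i,w_i$ are not proportional, they are coprime, so $v_i^{r-1}$ divides the linear form $\dot w_i$. For $r\geq3$ this forces $\dot a_i=\dot b_i=0$. This is where $r\geq3$ is used: for $r=2$, $\dot w_i=cv_i$, $\dot v_i=-cw_i$ is a nonzero solution. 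The test is exactly what gives $a_1\parallel q$ when $u_1=u_2=0$. You never isolate that case, although it is the only one containing critical points. There, some index carries a nonzero $(\dot a_i,\dot b_i)$, so $v_i\propto w_i$, while $v_i\propto qx$ and $w_i\propto a_1x$.

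\textbf{The one-dead case.} This is the more serious omission. When exactly one $u_i$ vanishes, proportionality $v_i\propto w_i$ decides nothing. A parameter with $a_1\parallel q$, $b_i\parallel q$ and $b_jk\neq0$ is not of the form \eqref{eq:shallow-critical-form}, so you must show the kernel is trivial there. Your claim that the computation ``would force the other row $b_j$ also to annihilate $k$'' has no mechanism behind it.

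The row $b_j$ enters only through the live coordinate $j$, which yields $\dot a_j=\dot b_j=0$. In particular $\dot w_j=0$; you do not record this, and it is the decisive fact. With $\dot w_j=0$ and $b_i=\mu q$, cancelling $w_1^{r-1}$ from $\dot u_i=0$ leaves
\[
(\dot b_i k)\,w_1+c'\,\dot w_i=0,\qquad c'\neq0,
\]
so $\dot w_i=cw_1$ for a scalar $c$. Suppose $c\neq0$. Then $v_i^{r-1}\dot v_i=-w_i^{r-1}\dot w_i$ is a nonzero multiple of $w_1^r$, so $\dot v_i\propto w_1\propto v_i$, i.e.\ $\dot b_i\parallel q$. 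Hence $\dot b_ik=0$, and the display forces $\dot w_i=0$, a contradiction. So $c=0$, which gives $\dot w_i=0$ and then $\dot v_i=0$, and the kernel is trivial; this is \Cref{lem:one-dead-neuron}.

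In the two-dead case this argument fails precisely because $\dot w_j$ is free to cancel the term $rb_i(w^{r-1}*\dot w)$. That is why the scaling $\dot A\parallel A$ survives there. Without these two steps, the ``only if'' direction for degenerate parameters without zero rows remains unproved.
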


\begin{remark}
    Not all degenerate parameters are critical. For instance, take \[
    A := \begin{pmatrix}
     1 & 0 \\
     1 & 0
    \end{pmatrix} , \quad B := \begin{pmatrix}
     1 & -1 \\
     1 & 1
    \end{pmatrix} .
    \] Then $u = B(Ax)^r = (0, 2x_1^r)^\top$, so the parameter is degenerate. However, neither matrix has a zero row, and $B$ is invertible, whereas every $B$ in \eqref{eq:shallow-critical-form} is singular. Thus \Cref{prop:shallow-critical-classification} shows that this parameter is not critical for any $r \geq 3$.
\end{remark}

We prove necessity through three kernel calculations. The first separates coordinates according to whether the monomial preactivation $u_i$ vanishes.

\begin{lemma}
Suppose all rows of $A$ and $B$ are nonzero and $\left(\dot{A} ,\dot{B}\right) \in \ker J_{\psi} \left(A , B\right)$. For each $i$:

 \begin{enumerate}
 \item If $u_i \neq 0$, then $\dot{a}_i =\dot{b}_i = 0$.
 \item If $u_i = 0$, then $\dot{u}_i = 0$ and \begin{equation}
v_i^{r - 1}\dot{v}_i + w_i^{r - 1}\dot{w}_i = 0 .
\label{eq:shallow-dead-row}
\end{equation} If $v_i , w_i$ are not proportional, then $\dot{a}_i =\dot{b}_i = 0$.
 \end{enumerate}
\label[lemma]{lem:shallow-coordinate-test} \end{lemma}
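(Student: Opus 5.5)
The plan is to work coordinatewise with the $i$-th component of \eqref{eq:shallow-derivative}, which for a kernel vector reads
\[
(u_i+v_i)^{r-1}(\dot u_i+\dot v_i) + w_i^{r-1}\dot w_i = 0 ,
\]
after dividing by $r$. I then compare homogeneous components. The relevant degrees are as follows. Since all rows of $A$ and $B$ are nonzero, $w_i = a_i x$ and $v_i = b_i x$ are nonzero linear forms, and $\dot w_i = \dot a_i x$ and $\dot v_i = \dot b_i x$ are linear forms. The polynomial $u_i = b_i w^r$ is either zero or homogeneous of degree $r$. Likewise $\dot u_i = \dot b_i w^r + r\, b_i (w^{r-1}*\dot w)$ is zero or homogeneous of degree $r$. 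The term $w_i^{r-1}\dot w_i$ is homogeneous of degree $r$. Since the linear forms $\dot a_i x$ and $\dot b_i x$ vanish exactly when $\dot a_i = 0$ and $\dot b_i = 0$, every conclusion reduces to showing that $\dot w_i$ and $\dot v_i$ vanish.

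\emph{Case $u_i \neq 0$.} I peel off homogeneous components from the top down.
\begin{itemize}
\item Expanding $(u_i+v_i)^{r-1} = \sum_j \binom{r-1}{j} u_i^j v_i^{r-1-j}$ shows that the component of degree $r^2$ in the equation is $u_i^{r-1}\dot u_i$. Nothing else reaches that degree, so $\dot u_i = 0$.
\item With $\dot u_i=0$, the equation becomes $(u_i+v_i)^{r-1}\dot v_i + w_i^{r-1}\dot w_i = 0$. The top component of the first term is $u_i^{r-1}\dot v_i$, of degree $r(r-1)+1 > r$. The second term has degree $r$, so it cannot cancel this. Hence $\dot v_i = 0$.
\item What remains is $w_i^{r-1}\dot w_i = 0$. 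Since $w_i \neq 0$, this gives $\dot w_i = 0$.
\end{itemize}
Together these give $\dot a_i = \dot b_i = 0$.

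\emph{Case $u_i = 0$.} The equation reduces to
\[
v_i^{r-1}\dot u_i + v_i^{r-1}\dot v_i + w_i^{r-1}\dot w_i = 0 .
\]
The first term is zero or homogeneous of degree $2r-1$. The other two terms are homogeneous of degree $r$. Since $r\geq 3$ gives $2r-1>r$, the first term must vanish on its own. As $v_i\neq 0$, this yields $\dot u_i = 0$, and what is left is exactly \eqref{eq:shallow-dead-row}.

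\emph{The non-proportional subcase.} Suppose $v_i$ and $w_i$ are not proportional. Then they are linearly independent linear forms, and I make a linear change of coordinates with $y_1 = v_i$ and $y_2 = w_i$. The relation becomes $y_1^{r-1}\dot v_i = -y_2^{r-1}\dot w_i$. Unique factorization shows that $y_1^{r-1}$ divides $\dot w_i$. But $\dot w_i$ has degree at most one and $r-1\geq 2$, so $\dot w_i = 0$, and then $\dot v_i = 0$. Again this gives $\dot a_i=\dot b_i=0$.

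I do not expect a real obstacle. The one point needing care is in the first case: the cancellation argument must use the top homogeneous component at each stage. Here $r\ge3$ (indeed even $r\geq 2$) keeps the degree $r(r-1)+1$ strictly above the degree $r$ of the term $w_i^{r-1}\dot w_i$.
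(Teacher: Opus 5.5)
Your proof is correct and follows the paper's argument step for step. You peel off the degree-$r^2$ and then the degree-$r(r-1)+1$ components when $u_i\neq 0$, isolate the degree-$(2r-1)$ term when $u_i=0$, and use coprimality of the nonproportional linear forms $v_i,w_i$ to get $v_i^{r-1}\mid \dot w_i$, which kills $\dot w_i$ for $r\geq 3$; your change of coordinates $y_1=v_i$, $y_2=w_i$ simply spells out the paper's coprimality step.
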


\begin{proof}
Divide the $i$th coordinate of \eqref{eq:shallow-derivative} by $r$. If $u_i \neq 0$, its degree-$r^2$ component is $u_i^{r - 1}\dot{u}_i$, so $\dot{u}_i = 0$. Its component of degree $r\left(r - 1\right)+ 1$ is then $u_i^{r - 1}\dot{v}_i$, so $\dot{v}_i = 0$. The remaining equation is $w_i^{r - 1}\dot{w}_i = 0$, giving $\dot{w}_i = 0$.

If $u_i = 0$, the equation becomes \[
v_i^{r - 1}\dot{u}_i + v_i^{r - 1}\dot{v}_i + w_i^{r - 1}\dot{w}_i = 0 .
\] The first term is homogeneous of degree $2 r - 1$, and the other terms are homogeneous of degree $r$. Since $v_i \neq 0$, this gives $\dot{u}_i = 0$ and \eqref{eq:shallow-dead-row}. If $v_i , w_i$ are nonproportional, they are coprime linear forms, so $v_i^{r - 1}$ divides $\dot{w}_i$. For $r\geq 3$ this forces the linear form $\dot{w}_i$ to vanish, and then $\dot{v}_i = 0$ as well. \end{proof}

\begin{lemma}
Suppose all weight rows are nonzero and $w_2 =\lambda w_1$ for some $\lambda \neq 0$. If exactly one of $u_1 , u_2$ vanishes, then $J_{\psi} \left(A , B\right)$ is injective. \label[lemma]{lem:one-dead-neuron} \end{lemma}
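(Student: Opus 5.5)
We fix notation first. Let $j$ be the index with $u_j \neq 0$ and $i$ the index with $u_i = 0$. Since $w_2 = \lambda w_1$, we have $w^r = w_1^r (1, \lambda^r)^\top$, so $u = w_1^r\, B (1,\lambda^r)^\top$. Hence
\[
b_{i1} + \lambda^r b_{i2} = 0, \qquad b_{j1} + \lambda^r b_{j2} \neq 0 .
\]
Take $(\dot A, \dot B) \in \ker J_\psi(A,B)$. By part (1) of \Cref{lem:shallow-coordinate-test}, $\dot a_j = \dot b_j = 0$, so $\dot w_j = 0$. It remains to show $\dot a_i = \dot b_i = 0$. If $v_i$ and $w_i$ are not proportional, part (2) of the same lemma already gives this. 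So the main case is $v_i = \kappa w_i$ with $\kappa \neq 0$, i.e.\ $b_i = \kappa a_i$; here $v_i \neq 0$ because the rows of $B$ are nonzero.

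In this case I will combine two equations from \Cref{lem:shallow-coordinate-test}(2): $\dot u_i = 0$ and \eqref{eq:shallow-dead-row}. Write $e_i := \lambda^{r-1}$ if $i = 2$ and $e_i := 1$ if $i = 1$. Using $\dot w_j = 0$ and $w_2 = \lambda w_1$, we get
\[
\dot u_i = w_1^r\,(\dot b_{i1} + \lambda^r \dot b_{i2}) + r\, w_1^{r-1}\, b_{ii}\, e_i\, \dot w_i = 0 .
\]
Since $\dot w_i$ is linear and $r \geq 2$, divisibility by $w_1^{r-1}$ forces two things: $\dot w_i$ is proportional to $w_1$, and we can write $\dot w_i = c\, w_i$ for a scalar $c$.

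Next I substitute into \eqref{eq:shallow-dead-row}. With $v_i = \kappa w_i$, it becomes $\kappa^{r-1}\dot v_i + \dot w_i = 0$. Therefore $\dot b_i x = \dot v_i = -\kappa^{1-r} c\, w_i$, which gives
\[
\dot a_i = c\, a_i, \qquad \dot b_i = -\kappa^{1-r} c\, a_i = -\kappa^{-r} c\, b_i .
\]
Then $\dot b_{i1} + \lambda^r \dot b_{i2} = -\kappa^{-r} c\,(b_{i1} + \lambda^r b_{i2}) = 0$, because $u_i = 0$. The equation $\dot u_i = 0$ therefore reduces to $r\, b_{ii}\, e_i\, c = 0$.

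It remains to check $b_{ii} \neq 0$, and this is the step that needs care. If $b_{ii} = 0$, then the relation $b_{i1} + \lambda^r b_{i2} = 0$ with $\lambda \neq 0$ forces the other entry of $b_i$ to vanish too. That would make $b_i$ a zero row, which is excluded. Since also $e_i \neq 0$, we get $c = 0$, and hence $\dot a_i = \dot b_i = 0$.

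Together with $\dot a_j = \dot b_j = 0$, the kernel is trivial, so $J_\psi(A,B)$ is injective. The main obstacle I anticipate is the proportional subcase. There one must check that the $u_i$-equation alone does not close the argument, and that the relation $u_i = 0$ exactly cancels the $\dot b_i$ contribution. This leaves only the term $b_{ii}\, c$, which is controlled by the nonzero-row hypothesis. In this subcase the equation $\dot u_j = 0$ is not needed, since its coefficient $b_{ji}$ could vanish.
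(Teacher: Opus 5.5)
Your proof is correct and follows essentially the paper's argument: both derive $\dot w_i = c\,w_i$ from $\dot u_i = 0$, use \eqref{eq:shallow-dead-row} to force $\dot b_i$ to be proportional to $b_i$ (so that $\dot b_i(1,\lambda^r)^\top = 0$), and conclude $c = 0$. The differences are cosmetic: you dispose of the nonproportional case of $v_i, w_i$ up front via \Cref{lem:shallow-coordinate-test}(2), whereas the paper derives that proportionality from $c \neq 0$; and your step ``$\dot w_i$ is proportional to $w_1$'' already needs $b_{ii} e_i \neq 0$, so the check $b_{ii} \neq 0$ (the paper's $\mu \neq 0$) should come before that step rather than at the end.
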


\begin{proof}
We give the calculation for $u_1 = 0$, $u_2 \neq 0$; interchanging the two coordinate indices gives the other case. Take a kernel tangent $\left(\dot{A} , \dot{B}\right)$. By \Cref{lem:shallow-coordinate-test}, $\dot{a}_2 =\dot{b}_2 = 0$. Put \[
h\mathrel{:=}\begin{pmatrix}
 1 \\
 \lambda^r
\end{pmatrix} , \quad q\mathrel{:=}\left(1 , -\lambda^{- r}\right).
\] Since $w^r = w_1^r h$ and $u_1 = 0$, we have $b_1 =\mu q$ with $\mu \neq 0$. Cancelling $w_1^{r - 1}$ from $\dot{u}_1 = 0$ gives \begin{equation}
\left(\dot{b}_1 h\right)w_1 + r \mu \left(q \dot{w}\right) = \left(\dot{b}_1 h\right)w_1 + r \mu \dot{w}_1 = 0 .
\label{eq:one-dead-relation}
\end{equation} Thus $\dot{w}_1 = c w_1$ for a scalar $c$. If $c = 0$, \eqref{eq:shallow-dead-row} also gives $\dot{v}_1 = 0$, and the tangent is zero.

Suppose $c \neq 0$. \eqref{eq:shallow-dead-row} gives $v_1^{r - 1}\dot{v}_1 = - c w_1^r$, which implies the forms $v_1$, $\dot{v}_1$ and $w_1$ are all proportional. As $v_1 =\mu q x$, the row $\dot{b}_1$ is therefore proportional to $q$. Hence $\dot{b}_1 h = 0$, and \eqref{eq:one-dead-relation} forces $\dot{w}_1 = 0$, contradicting $c \neq 0$. \end{proof}

\begin{lemma}
Suppose all weight rows are nonzero, $w_2 =\lambda w_1$ with $\lambda \neq 0$, and $u_1 = u_2 = 0$. If $J_{\psi} \left(A , B\right)$ has a nonzero kernel, then $w_1$ is proportional to $x_1 -\lambda^{- r} x_2$. \label[lemma]{lem:two-dead-alignment} \end{lemma}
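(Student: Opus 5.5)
The plan is to prove the contrapositive: assuming $w_1$ is not proportional to $\ell\mathrel{:=}x_1-\lambda^{-r}x_2$, I will show that every kernel tangent vanishes. The main tool is part (2) of \Cref{lem:shallow-coordinate-test}, which kills the tangent rows of every coordinate $i$ with $u_i=0$ whose forms $v_i,w_i$ are nonproportional.

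\textbf{Step 1: the structure of $B$.} I would first pin down $B$ from the hypotheses, exactly as in the proof of \Cref{lem:one-dead-neuron}. Since $w_2=\lambda w_1$, we have
\[
w^r=w_1^r h, \qquad h\mathrel{:=}(1,\lambda^r)^\top .
\]
Hence $u=Bw^r=w_1^r\,Bh$. The row $a_1$ is nonzero, so $w_1\neq 0$, and therefore $u_1=u_2=0$ forces $Bh=0$. Both rows of $B$ are then multiples of the row vector $q\mathrel{:=}(1,-\lambda^{-r})$, say $b_i=\mu_i q$. The rows are nonzero by assumption, so $\mu_i\neq 0$. Consequently
\[
v_i=b_i x=\mu_i\,\ell, \qquad i=1,2,
\]
so both $v_1$ and $v_2$ are nonzero multiples of the single linear form $\ell$.

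\textbf{Step 2: applying \Cref{lem:shallow-coordinate-test}.} Now suppose $w_1$ is not proportional to $\ell$. Then $w_2=\lambda w_1$ with $\lambda\neq 0$ is not proportional to $\ell$ either. Thus for each $i$ the forms $v_i$ and $w_i$ are nonproportional, and moreover $u_i=0$. Part (2) of \Cref{lem:shallow-coordinate-test} gives $\dot a_i=\dot b_i=0$ for $i=1,2$, so every kernel tangent $(\dot A,\dot B)$ is zero. This means $J_{\psi}(A,B)$ is injective, which contradicts the assumption of a nonzero kernel. Hence $w_1$ is proportional to $\ell$.

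\textbf{Potential obstacle.} The argument is short, so the only point needing care is checking the hypotheses of \Cref{lem:shallow-coordinate-test}(2). These are: all weight rows nonzero (assumed), $v_i\neq 0$ (from $\mu_i\neq0$), and $w_i\neq0$ (from $a_i\neq0$). Its divisibility step then uses $r\geq 3$, which is in force throughout this subsection.
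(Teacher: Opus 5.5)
Your proposal is correct and follows essentially the same route as the paper: you show that both rows of $B$ are multiples of $q=(1,-\lambda^{-r})$, so each $v_i$ is a multiple of $\ell$, and then apply part (2) of \Cref{lem:shallow-coordinate-test}. The only differences are that you argue by contrapositive where the paper picks an index with a nonzero tangent pair, and that you spell out why $Bh=0$, namely $u=w_1^r\,Bh$ with $w_1\neq 0$, which the paper simply states.
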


\begin{proof}
As in the previous proof, put \[
h\mathrel{:=}\begin{pmatrix}
 1 \\
 \lambda^r
\end{pmatrix} , \quad q\mathrel{:=}\left(1 , -\lambda^{- r}\right).
\] By assumption, $B h = 0$, so $b_1 =\mu q$ and $b_2 =\nu q$ with $\mu ,\nu \neq 0$. Put $\ell\mathrel{:=} q x$ and choose a nonzero kernel element $(\dot{A} ,\dot{B})$. At least one pair $(\dot{a}_i ,\dot{b}_i)$ is nonzero, so \Cref{lem:shallow-coordinate-test} forces $v_i$ and $w_i$ to be proportional for that index. Each $v_i$ is a nonzero multiple of $\ell$ and each $w_i$ is a nonzero multiple of $w_1$, so $w_1$ is proportional to $\ell$. \end{proof}

\begin{proof}[Proof of 
\Cref{prop:shallow-critical-classification}] A zero row is critical by \Cref{lem:zero-row-critical}, so assume all rows are nonzero. If $\left(A , B\right)$ is critical, \Cref{prop:regularity-unbiased} makes it degenerate. Thus some $u_i = 0$. A nonzero row of $B$ then gives a linear dependence between the nonzero powers $w_1^r , w_2^r$. Since powers of nonproportional linear forms are linearly independent, we obtain $w_2 =\lambda w_1$ with $\lambda \neq 0$. \Cref{lem:one-dead-neuron} forces $u_1 = u_2 = 0$, and \Cref{lem:two-dead-alignment} gives $w_1 =\rho(x_1 -\lambda^{- r} x_2)$. Because both rows of $B$ annihilate $\left(1 ,\lambda^r\right)^\top$, we deduce $B$ has the desired form. Since $\left(A , B\right)$ is real, the ratio $\lambda$ of the real linear forms $w_2$ and $w_1$ is real, and so are $\rho$ and the entries $\mu ,\nu$ of $B$.

Conversely, suppose $\left(A , B\right)$ are as in \eqref{eq:shallow-critical-form}. With $\ell = x_1 -\lambda^{- r} x_2$, we have $B(A x)^r = 0$. A direct computation gives \[
f\left(x\right)=\ell^r \begin{pmatrix}
 \rho^r +\mu^r \\
 \left(\rho \lambda\right)^r +\nu^r
\end{pmatrix} + x .
\] Keep $\lambda$ fixed and take the scalar tangent \[
\dot{\rho} =\rho , \quad \dot{\mu} = -\rho^r \mu^{1 - r} , \quad \dot{\nu} = -\left(\rho \lambda\right)^r \nu^{1 - r} .
\] Both coefficients in the displayed output have derivative zero. The parameter tangent is nonzero because $\dot{\rho} =\rho \neq 0$, so $\left(A , B\right)$ is critical. \end{proof}

\subsection{Singularities}
\label{sec:singularities}

We now show that the critical points studied above can cause singularities on the neurovariety. To define singularities in the sense of algebraic geometry formally, we recall that the \textit{Zariski tangent space} at $f \in \overline{\mathcal{M}}$ is \[
T_f \overline{\mathcal{M}}\mathrel{:=}\left\{v \in \mathbb{R}[x_1,\ldots,x_d]^d_{\leq r^L} : J_F (f)v = 0 \, \text{for every} \, F \in I\left(\mathcal{M}\right)\right\},
\] where $I\left(\mathcal{M} \right)$ is the ideal of polynomials vanishing on $\mathcal{M}$. The point $f$ is \textit{singular} on $\overline{\mathcal{M}}$ if \[
\dim T_f \overline{\mathcal{M}} >\dim \mathcal{M} ,
\] and \textit{nonsingular} otherwise.

Depending on their type, singularities may affect training dynamics and generalization \cite{marchetti2025position}. To study this hypothesis formally for a given architecture, a first step is to characterize  the singularities on the associated neurovariety. For monomial and polynomial convolutional networks, the singularities have been fully classified, whereas for plain fully connected networks only a partial description is available \cite{shahverdi2026learningrazorsedgeidentifiability,shahverdi2024geometry}. Here, we show that the neurovarieties of unbiased ResNets with an activation satisfying \eqref{eq:activation} are likewise singular. A complete classification, together with an analysis of the impact of these singularities, is left for future work.



\begin{proposition}
For the activation in \eqref{eq:activation}, if $L , d\geq 2$, then the unbiased neurovariety $\overline{\mathcal{M}}$ is singular at $\psi(I, \ldots, I, 0)$. \label[proposition]{prop:sing} \end{proposition}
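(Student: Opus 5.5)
The plan is to produce more than $Ld^2$ linearly independent vectors in $T_f\overline{\mathcal{M}}$ at $f := \psi(I,\ldots,I,0)$. By \Cref{cor:dimension}, under the hypotheses of \Cref{thm:identifiability-unbiased} we have $\dim\mathcal{M} = Ld^2$, so this proves singularity. Write $\theta^\ast := (I,\ldots,I,0)$. Let $F_l$ denote the layer outputs of the identity network, so that $f = F_{L-1}$. The tangent vectors will come from two sources: the differential of $\psi$ along the first $L-1$ layers, and curves in $\mathcal{M}$ that leave $f$ only to higher order through the zero last layer.

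\emph{First source.} Every curve in $\mathcal{M}$ through $f$ has its velocity in $T_f\overline{\mathcal{M}}$. Hence the image of $J_\psi(\theta^\ast)$ lies in $T_f\overline{\mathcal{M}}$. Since $W_L = 0$ and $\sigma(0)=0$, the last layer contributes nothing, and the ResNet output near $\theta^\ast$ with $W_L=0$ equals the depth-$(L-1)$ ResNet output. The parameter $(I,\ldots,I)$ is nondegenerate, as noted after \Cref{def:nondegenerate}, and layer-independent by \Cref{lemma:lin-indep-fl}. \Cref{prop:regularity-unbiased} therefore shows that the differential of the depth-$(L-1)$ parametrization at $(I,\ldots,I)$ is injective. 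Its image $V_1$ has dimension $(L-1)d^2$ and consists of polynomial vectors of degree at most $r^{L-1}$.

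\emph{Second source.} Let $m\ge 2$ be the smallest exponent with $a_m\neq 0$. For $V\in\mathbb{R}^{d\times d}$, consider the curve $\gamma(t) := \psi(I,\ldots,I,tV)$ in $\mathcal{M}$. It satisfies
\[
\gamma(t) = f + t^m a_m (VF_{L-1})^m + O(t^{m+1}).
\]
For every $F\in I(\mathcal{M})$ we have $F(\gamma(t))\equiv 0$. The coefficients of $t^1,\ldots,t^{m-1}$ in this expansion vanish automatically, and the coefficient of $t^m$ is $J_F(f)\,a_m(VF_{L-1})^m$. Hence $(VF_{L-1})^m\in T_f\overline{\mathcal{M}}$. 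Taking linear combinations, and using that the $m$-th powers of linear forms span all forms of degree $m$, we get that $T_f\overline{\mathcal{M}}$ contains
\[
V_2 := \bigl\{(Q_1(F_{L-1}),\ldots,Q_d(F_{L-1})) : Q_i \text{ forms of degree } m \text{ in } d \text{ variables}\bigr\}.
\]

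\emph{Counting, the main step.} The key claim is $V_1\cap V_2 = 0$ and $\dim V_2 = d\binom{d+m-1}{m}$. With identity weights, \Cref{lem:leading-components} gives that the top-degree part of $(F_{L-1})_j$ is $\kappa_{L-1}x_j^{r^{L-1}}$. If the $Q_i$ are not all zero, the degree-$mr^{L-1}$ component of $Q_i(F_{L-1})$ is $Q_i(\kappa_{L-1}x_1^{r^{L-1}},\ldots,\kappa_{L-1}x_d^{r^{L-1}})$. This is nonzero, because powers of distinct variables are algebraically independent. So every nonzero element of $V_2$ has some coordinate of degree $mr^{L-1} > r^{L-1}$, while all elements of $V_1$ have degree at most $r^{L-1}$. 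This gives both the injectivity of $(Q_i)\mapsto (Q_i(F_{L-1}))_i$ and the trivial intersection. It follows that
\[
\dim T_f\overline{\mathcal{M}} \ge (L-1)d^2 + d\binom{d+m-1}{m} \ge (L-1)d^2 + d\cdot\frac{d(d+1)}{2} > Ld^2
\]
for $d\ge 2$.

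I expect the delicate point to be the justification of the second source. One must check that the Taylor expansion of $\gamma$ really begins at order $t^m$, which follows from $\sigma(tz)=a_mt^mz^m+O(t^{m+1})$ together with the fact that the skip connection leaves $f$ unchanged. One must also check that the lowest nonvanishing order of $F\circ\gamma$ isolates $J_F(f)$ applied to the $t^m$-coefficient. Since the lower-order coefficients of $\gamma - f$ vanish, no Hessian terms of $F$ appear at order $t^m$, so this holds. Note that the hypothesis $L\ge 2$ is used only through $\dim\mathcal{M}=Ld^2$ and the depth-$(L-1)$ regularity; for $L=1$ the same count still works with $V_1 = 0$.
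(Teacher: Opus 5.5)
Your proposal is correct and follows the paper's proof essentially step for step. The paper uses the same three ingredients: the $(L-1)d^2$-dimensional image of the differential in the first $L-1$ layers (via \Cref{prop:regularity-unbiased}), the order-$s$ tangent directions $(Q_i(f))_i$ obtained from the curve $t\mapsto\psi(I,\ldots,I,tV)$ through the first nonvanishing Taylor coefficient, and the degree separation at the identity weights, which gives both the trivial intersection and the count exceeding $Ld^2$.
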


We prove the proposition by constructing tangent directions. The construction uses the first nonzero coefficient of a curve, which belongs to the Zariski tangent space even when the curves ordinary derivative vanishes.

\begin{lemma}
Let $p \in \overline{\mathcal{M}}$. If a polynomial curve in $\overline{\mathcal{M}}$ has expansion $\gamma(t)= p + t^n v + O\left(t^{n + 1}\right)$ with $n\geq 1$, then $v \in T_p \overline{\mathcal{M}}$. \label[lemma]{lem:arc} \end{lemma}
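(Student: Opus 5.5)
The plan is to show that $v$ is annihilated by the differential of every polynomial $F$ in $I(\mathcal{M})$, since that is exactly the defining condition for $T_p \overline{\mathcal{M}}$. First we note that $I(\mathcal{M}) = I(\overline{\mathcal{M}})$, because a polynomial vanishes on $\mathcal{M}$ if and only if it vanishes on its Zariski closure. Hence every $F \in I(\mathcal{M})$ vanishes identically along the curve $\gamma$, which lies in $\overline{\mathcal{M}}$. This means $F(\gamma(t)) = 0$ as a polynomial in $t$.

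Next we expand $F(\gamma(t))$ by Taylor's formula around $p$. This expansion is exact and finite, since $F$ is a polynomial and $\gamma$ is a polynomial curve. Write $\gamma(t) - p = t^n v + t^{n+1} w(t)$, where $w$ is a polynomial curve. Then
\[
F(\gamma(t)) = F(p) + J_F(p)\bigl(\gamma(t) - p\bigr) + \sum_{k \geq 2} Q_k\bigl(\gamma(t) - p\bigr),
\]
where each $Q_k$ is homogeneous of degree $k$ in its argument. We then collect terms by their order in $t$:
\begin{itemize}
\item $F(p) = 0$, because $p \in \overline{\mathcal{M}}$.
\item The linear term equals $t^n J_F(p) v + O(t^{n+1})$.
\item Each $Q_k$ term with $k \geq 2$ is $O(t^{kn})$, and $kn \geq 2n \geq n+1$ because $n \geq 1$.
\end{itemize}
So the coefficient of $t^n$ in $F(\gamma(t))$ is exactly $J_F(p)v$. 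Since $F(\gamma(t))$ is identically zero, this coefficient vanishes, and therefore $J_F(p) v = 0$.

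Because $F \in I(\mathcal{M})$ was arbitrary, $v$ satisfies the defining condition, so $v \in T_p\overline{\mathcal{M}}$.

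There is no real obstacle here. The only point that needs care is the order count in the third item: the higher Taylor terms are $O(t^{kn})$ with $kn \ge n+1$, which holds precisely because $n \geq 1$. This is why a first nonzero coefficient of higher order still gives a tangent vector.
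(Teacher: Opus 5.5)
Your proof is correct and follows the same route as the paper: for each $F$ in the ideal, expand $F(\gamma(t))$, use $F(p)=0$, and read off that the $t^n$ coefficient $J_F(p)v$ must vanish. The only difference is that you spell out details the paper leaves implicit, namely that $I(\mathcal{M})=I(\overline{\mathcal{M}})$ and that the higher-order Taylor terms are $O(t^{2n})$ with $2n\geq n+1$.
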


\begin{proof}
For $F \in I\left(\overline{\mathcal{M}}\right)$, expansion gives \[
0 = F\left(\gamma(t)\right)= t^n J_F \left(p\right)v + O\left(t^{n + 1}\right).
\] Thus $J_F \left(p\right)v = 0$ for every defining equation $F$, as required. \end{proof}

Let $s\mathrel{:=}\min\left\{k : a_k \neq 0\right\}$, so $2\leq s\leq r$. If $f$ is any depth-$\left(L - 1\right)$ ResNet, applying another residual block with the zero weight matrix preserves $f$; hence $f$ is realized as a point of the depth-$L$ neurovariety $\overline{\mathcal{M}}$. The next lemma describes directions obtained by varying that final matrix.

\begin{lemma}
Let $f =\left(F_1 ,\ldots , F_d\right)$ be the output of an unbiased ResNet of depth $L - 1$ and width $d$, viewed as a point of the depth-$L$ unbiased neurovariety $\overline{\mathcal{M}}$. Define \[
W_f\mathrel{:=}\left\{H\left(f\right): H :\mathbb{R}^d \rightarrow \mathbb{R}^d \text{is homogeneous of degree} s\right\}.
\] Then $W_f \subset T_f \overline{\mathcal{M}}$ and $\dim W_f = d \binom{s + d - 1}{d - 1}$. \label[lemma]{lem:final-layer-tangents} \end{lemma}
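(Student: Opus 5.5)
The plan is to realize each element of $W_f$ as the leading coefficient of a polynomial curve in $\mathcal{M}$, apply \Cref{lem:arc}, and then count dimensions using algebraic independence.

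Fix weights $(W_1,\ldots,W_{L-1})$ realizing $f$, and append a final block with weight matrix $tM$, where $M\in\mathbb{R}^{d\times d}$ is arbitrary. The resulting output is
\[
\gamma_M(t)=f+\sigma(tMf)=f+t^s a_s (Mf)^s+O(t^{s+1}),
\]
since the lowest term of $\sigma$ has degree $s$. This is a polynomial curve in $\mathcal{M}\subseteq\overline{\mathcal{M}}$ with $\gamma_M(0)=f$. By \Cref{lem:arc}, $(Mf)^s\in T_f\overline{\mathcal{M}}$ (we divide by $a_s\neq0$). Concretely, the $i$th coordinate of this vector is $(m_i f)^s$, where $m_i$ is the $i$th row of $M$.

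To obtain every $H(f)$, we use two facts. First, $T_f\overline{\mathcal{M}}$ is a linear space. Second, we may choose $M$ with only row $i$ nonzero. This shows that $e_i\,(m f)^s$ lies in the tangent space for every row vector $m$. Powers $\ell^s$ of linear forms $\ell=m y$ span the space of degree-$s$ forms in $y$ (the classical fact that $s$-th powers of linear forms span $\mathrm{Sym}^s$, valid over $\mathbb{R}$). Substituting $y=f$ shows that $e_i\,G(f)$ is tangent for every homogeneous $G$ of degree $s$. Summing over $i$ then gives $W_f\subset T_f\overline{\mathcal{M}}$.

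For the dimension, $W_f$ is the image of the linear map $H\mapsto H(f)$. Its domain is the space of $d$-tuples of degree-$s$ forms, which has dimension $d\binom{s+d-1}{d-1}$, so it suffices to show that this map is injective. By \Cref{lemma:lin-indep-fl}, applied to the depth-$(L-1)$ network, the coordinates $F_1,\ldots,F_d$ of $f$ are algebraically independent. Hence $G(F_1,\ldots,F_d)=0$ forces $G=0$ for each coordinate $G$ of $H$, and the map is injective. The only mildly delicate point is the spanning fact for powers of linear forms; everything else is direct.
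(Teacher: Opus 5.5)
Your proposal is correct and follows the paper's proof essentially step for step. Both arguments use the curve $f+\sigma(tMf)$ with leading coefficient $a_s t^s (Mf)^s$, apply \Cref{lem:arc}, use the spanning property of $s$-th powers of linear forms row by row (your single-nonzero-row choice of $M$ spells out the paper's remark that the rows vary independently), and get injectivity of $H\mapsto H(f)$ from the algebraic independence in \Cref{lemma:lin-indep-fl}.
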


\begin{proof}
Fix the $L - 1$ weight matrices producing $f$, and fix a matrix $A$. Append an $L$th residual block with weight matrix $t A$. This defines a polynomial curve $\gamma :\mathbb{R} \rightarrow \overline{\mathcal{M}}$ by \[
\gamma(t)\left(x\right)\mathrel{:=} f\left(x\right)+\sigma(t A f\left(x\right)).
\] Since it has expansion \[
\gamma(t)= f + a_s t^s \left(A f\right)^s + O\left(t^{s + 1}\right),
\] \Cref{lem:arc} yields $\left(A f\right)^s \in T_f \overline{\mathcal{M}}$ for every matrix $A$.

Powers of linear forms span all homogeneous forms of degree $s$, as one can easily verify. Because the rows of $A$ vary independently, the vectors $\left(A f\right)^s$ consequently span $W_f$.

Substitution $H \mapsto H\left(f\right)$ is injective by the algebraic independence of $F_1 ,\ldots , F_d$ in \Cref{lemma:lin-indep-fl}. There are $\binom{s + d - 1}{d - 1}$ monomials of degree $s$ in $d$ variables, giving the stated dimension for $d$ output coordinates. \end{proof}

\begin{proof}
\Cref{prop:sing} Let $f =\left(F_1 ,\ldots , F_d\right)$ be the depth-$\left(L - 1\right)$ output at the identity weights. Appending a residual block with the zero matrix gives the point $f =\psi(I, \ldots, I, 0)$ of $\overline{\mathcal{M}}$.

Varying only the first $L - 1$ matrices gives a subspace $V \subset T_f \overline{\mathcal{M}}$. The identity parameter is nondegenerate, so \Cref{prop:regularity-unbiased} gives $\dim V =(L - 1)d^2$. Every vector in $V$ has degree at most $r^{L - 1}$.

The subspace $W_f$ from \Cref{lem:final-layer-tangents} also lies in $T_f \overline{\mathcal{M}}$. At the identity weights, the leading component of the coordinate $F_i$ is $c x_i^{r^{L - 1}}$ for some nonzero scalar $c$ independent of $i$ (\Cref{lem:leading-components}). Distinct degree-$s$ monomials in the $F_i$ have distinct leading monomials in $x$, so every nonzero vector in $W_f$ has degree $s r^{L - 1} > r^{L - 1}$. Hence $V \cap W_f =\left\{0\right\}$.

Since $d\geq 2$, we have \[
\binom{s + d - 1}{d - 1} > d .
\] Together with \Cref{cor:dimension}, this gives \[
\dim T_f \overline{\mathcal{M}}\geq \left(L - 1\right)d^2 + d \binom{s + d - 1}{d - 1} > L d^2 =\dim \overline{\mathcal{M}} .
\] Thus $f$ defines a singular point on $\overline{\mathcal{M}}$. \end{proof}

The identity weights ensure that the leading monomials in the last argument cannot cancel. Nondegeneracy of the preceding layers alone would not give the required disjointness of tangent subspaces.

\begin{example}
Let $L = d = r = 2$ and $\sigma(z)\mathrel{:=} z^2$. Take \[
\ell\mathrel{:=} x_1 - x_2 , \quad W_1\mathrel{:=}\begin{pmatrix}
 1 & - 1 \\
 1 & - 1
\end{pmatrix} , \quad f\mathrel{:=}\begin{pmatrix}
 \ell^2 + x_1 \\
 \ell^2 + x_2
\end{pmatrix} .
\] The one-layer parameter $W_1$ is nondegenerate. With $F_1 , F_2$ denoting the coordinates of $f$, the space $W_f$ contains \[
\begin{pmatrix}
 \left(F_1 - F_2\right)^2 \\
 0
\end{pmatrix} =\begin{pmatrix}
 \ell^2 \\
 0
\end{pmatrix} .
\] The ordinary derivative space $V$ contains the same vector: a direct computation shows that perturbing $W_1$ in the direction \[
\dot{W}_1\mathrel{:=}\frac{1}{2} \begin{pmatrix}
 1 & - 1 \\
 0 & 0
\end{pmatrix}
\] changes the output by $t\left(\ell^2 , 0\right)^\top + O\left(t^2\right)$. Thus $V \cap W_f \neq \left\{0\right\}$. This example explains the choice of the identity weights in \Cref{prop:sing}. \label[example]{ex:singular-choice} \end{example}

We conclude with a visualization of the scalar case.

\begin{example}
For $L = 2$, $d = 1$, and $\sigma(z) := z^2$, the coefficient map is
\[
\psi(a,b) = (c_2,c_3,c_4) = (a^2+b^2,\,2a^2b^2,\,a^4b^2).
\]
Eliminating $a,b$, as can be done with suitable software, gives the defining ideal of the neurovariety, generated by
\begin{equation}
F(c_2,c_3,c_4) := c_3^3 - 4c_2c_3c_4 + 8c_4^2.
\label{eq:scalar-neurovariety}
\end{equation}
Then one can check $\overline{\mathcal{M}} = V(F)$, and that
\[
\operatorname{Sing}(\overline{\mathcal{M}})
= \{(c_2,0,0):c_2\in\mathbb{R}\}.
\]
Its inverse image is $\{ab=0\}$, which is also the critical locus of $\psi$ by direct differentiation. Its intersection with $\mathcal{M}$ is the half-line $c_2\geq 0$.

\Cref{fig:scalar-neurovariety} shows the real image and a section of its Zariski closure. Setting $s := a^2$ and $t := b^2$ gives
\[
\mathcal{M}=\{(s+t,2st,s^2t):s,t\geq 0\}.
\]
In the plane $c_2=1$, the image is the loop $(c_3,c_4)=(2s(1-s),s^2(1-s))$, $0\leq s\leq 1$. Allowing all real $s$ gives the full real section of $V(F)$; the two values $s=0,1$ meet at its singular point $(0,0)$.
\label[example]{ex:scalar-geometry}
\label[example]{ex:scalar-singular}

As \Cref{fig:scalar-neurovariety} shows, the singular locus of $\mathcal{M}$ is exposed in the sense of \cite{ribot2026exposed,kohn2024function, shahverdi2026learningrazorsedgeidentifiability}; that is, the union of the Voronoi cells of the singular points is full-dimensional. Consequently, if a point $u$ in the ambient space of $\mathcal{M}$ is sampled from a distribution with full support, then with positive probability the distance from $u$ to $\mathcal{M}$ is attained at a singular point. Since minimizing the squared-error loss amounts to minimizing a distance \cite{marchetti2025critical}, this suggests that singular points play an important role in the training dynamics and generalization of ResNets.
\end{example}

\begin{figure}[htbp]
\centering
\includegraphics[width=\linewidth]{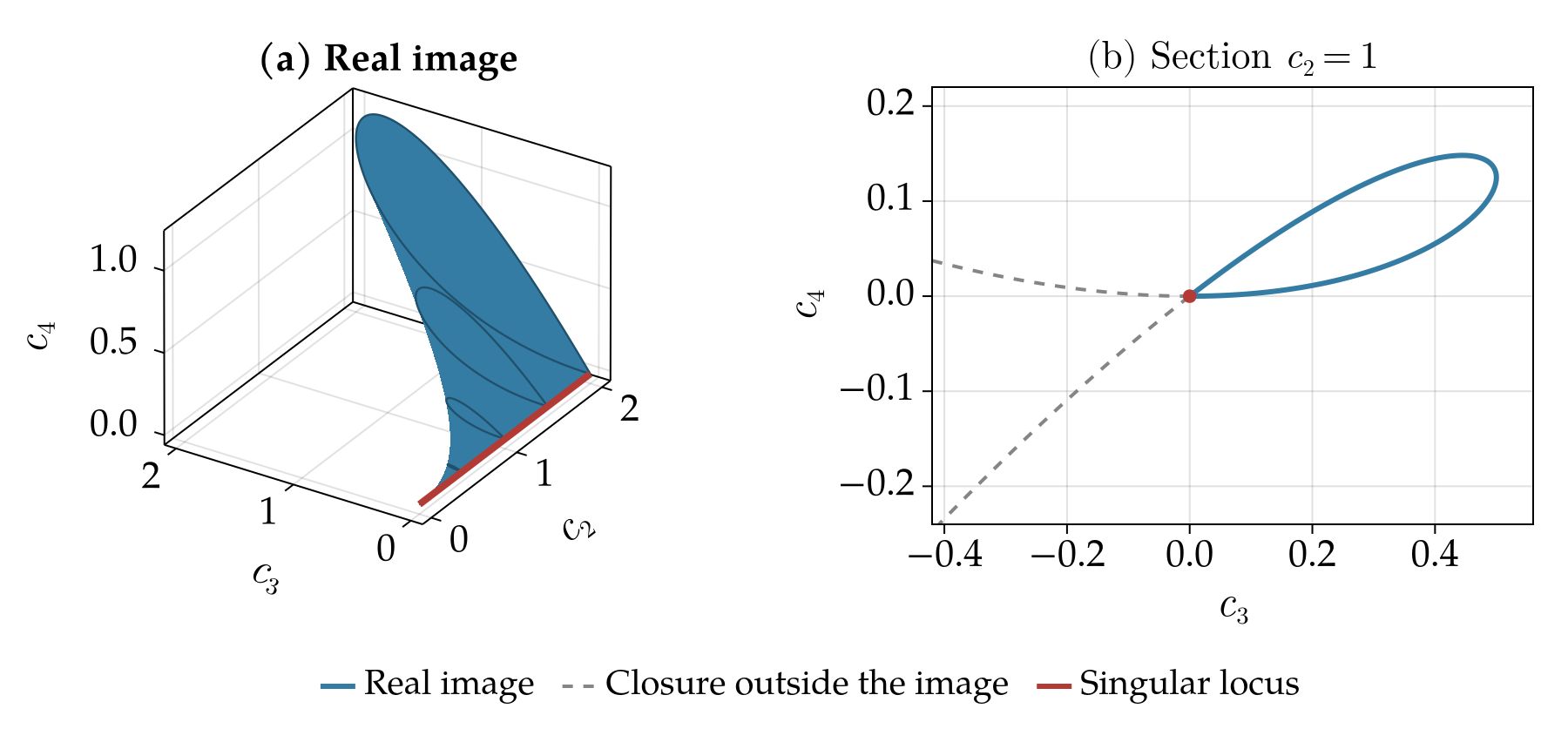}
\caption{Scalar quadratic ResNets with two layers.
Left: the real image $\mathcal{M}$ for $0\leq c_2\leq 2$, with its singular half-line in red.
Right: the section $c_2=1$; blue is the real image, and dashed gray branches belong to its Zariski closure but not to the image. The red point lies on the singular line of $V(F)$.}
\label{fig:scalar-neurovariety}
\end{figure}

\section{Conclusion}

We have shown that skip connections break the parameter symmetries of polynomial networks. Away from a degenerate locus, the fibers of the ResNet parametrization are singletons for generic activations of degree at least five, or consist of root-of-unity rescalings for a broad class of activations, and the differential is injective. As an application, we exhibited singular points on the unbiased neurovariety.


Several questions remain open. The fibers over degenerate parameters are not yet understood; they may consist of further discrete orbits or contain positive-dimensional families (\Cref{sec:degfib}). It would also be of interest to characterize the critical points of the parametrization and the singularities of the neurovariety completely, and, from the perspective of machine learning, to analyze their effect on training and generalization. Finally, a natural extension of our results is to networks of unequal widths, via projected skip connections.

\section*{Provenance statement}

The authors produced all proofs themselves, with the exception of the proof of the existence of singularities (\Cref{prop:sing}), which is due to Claude Fable 5. Claude Sonnet 5.5 was used to polish the writing, and Claude Opus 5.5 was used to produce the plots in \Cref{fig:scalar-neurovariety}.

\section*{Acknowledgements}

We are grateful to Vahid Shahverdi for  discussions on this topic. The authors were supported by the Wallenberg AI, Autonomous Systems and Software Program (WASP) funded by the Knut and Alice Wallenberg Foundation. GLM was supported by the AI4S initiative funded by the Knut and Alice Wallenberg Foundation.

\bibliographystyle{plain}
\bibliography{references}

\end{document}